\documentclass[10pt]{article} 


\usepackage{amsmath,amsfonts,bm}









\def\eqref#1{equation~\ref{#1}}









\def\1{\bm{1}}










\DeclareMathAlphabet{\mathsfit}{\encodingdefault}{\sfdefault}{m}{sl}
\SetMathAlphabet{\mathsfit}{bold}{\encodingdefault}{\sfdefault}{bx}{n}













\DeclareMathOperator*{\argmin}{arg\,min}

\usepackage[utf8]{inputenc} 
\usepackage[T1]{fontenc}    
\usepackage{url}            
\usepackage{booktabs}       
\usepackage{amsfonts}       
\usepackage{nicefrac}       
\usepackage{microtype}      
\usepackage{fullpage}

\usepackage{hyperref}
\usepackage{cleveref}

\usepackage{algorithm}
\usepackage{algorithmic}
\usepackage{amsthm} 
\usepackage{natbib, comment}

\newtheorem{theorem}{Theorem}
\newtheorem{proposition}{Proposition} 
\newtheorem{lemma}{Lemma} 
\newtheorem{assum}{Assumption}

\crefname{assumption}{assumption}{assumptions}
\theoremstyle{remark}

\allowdisplaybreaks

\usepackage{colortbl}
\usepackage{multirow}


\usepackage{threeparttable}
\usepackage[table]{xcolor}

\usepackage{hhline}
\usepackage{makecell}

\usepackage{wrapfig}
\usepackage{subfigure}
\usepackage{graphicx}
\usepackage{mathtools}

\usepackage{listings}


\title{Doubly Robust Instance-Reweighted Adversarial Training}


\author{Daouda Sow$^{1}$, Sen Lin$^1$, Zhangyang Wang$^2$, Yingbin Liang$^{1}$ \\
$^{1}$The Ohio State University, $^{2}$University of Texas at Austin \\
\texttt{sow.53@osu.edu, lin.4282@osu.edu, atlaswang@utexas.edu, liang889@osu.edu} \\ 
}


%

\allowdisplaybreaks[4]

\begin{document}

\maketitle

\begin{abstract}
Assigning importance weights to adversarial data has achieved great success in training adversarially robust networks under limited model capacity. However, existing instance-reweighted adversarial training (AT) methods heavily depend on heuristics and/or geometric interpretations to determine those importance weights, making these algorithms lack rigorous theoretical justification/guarantee. Moreover, recent research has shown that adversarial training suffers from a severe non-uniform robust performance across the training distribution, e.g., data points belonging to some classes can be much more vulnerable to adversarial attacks than others. To address both issues, in this paper, we propose a novel doubly-robust instance reweighted AT framework, which allows to obtain the importance weights via exploring distributionally robust optimization (DRO) techniques, and at the same time boosts the robustness on the most vulnerable examples. In particular, our importance weights are obtained by optimizing the KL-divergence regularized loss function, which allows us to devise new algorithms with a theoretical convergence guarantee. 
Experiments on standard classification datasets demonstrate that our proposed approach outperforms related state-of-the-art baseline methods in terms of average robust performance, and at the same time improves the robustness against attacks on the weakest data points. Codes will be available soon. 
\end{abstract} 

\section{Introduction}
Deep learning models are known to be vulnerable to malicious adversarial attacks \cite{nguyen2015deep}, i.e., small perturbation added to natural input data can easily fool state-of-the-art networks. Given that these deep neural networks are being heavily deployed in real-life applications, even in safety-critical applications, adversarial training (AT) \cite{madry2017towards,athalye2018obfuscated,carmon2019unlabeled} has been proposed for training networks to be robust to adversarial attacks \cite{athalye2018synthesizing,szegedy2013intriguing,goodfellow2014explaining,papernot2016limitations,nguyen2015deep,zhang2021causaladv,zhang2020attacks}. In particular, most existing defense strategies are based on the recipes similar to AT \cite{madry2017towards}, where 
the goal is to minimize the average loss of the worst-case adversarial data for the training distribution via solving a minimax optimization problem. 

Despite its success, the traditional AT method \cite{madry2017towards} has some major limitations. First, even though existing overparameterized neural networks seem to be good enough for natural data,
highly adversarial data consumes much more model capacity compared to their clean counterpart, making the minimization of the {\bf uniform} average adversarial loss a very pessimistic goal, as argued in \cite{zhang2020geometry}. 
To overcome this limitation, recent works \cite{zhang2020geometry,liu2021probabilistic,zeng2021adversarial,ding2018mma} assign an {\bf importance weight} to each data point in the training distribution, in order to emphasize the ones that are critical to determining the model's decision boundaries. By allowing more careful exploitation of the limited model capacity, such a simple {\bf instance-reweighted} scheme combined with traditional adversarial training has yielded a significant boost in the robust performance of current adversarially trained models. Yet, existing methods for instance-reweighted AT mostly adopt heuristic techniques and/or geometric intuitions in order to compute the instance weights, which makes these algorithms lack a principled and rigorous theoretical justification/guarantee. This hence motivates the following question we ask:

\textit{How to systematically determine the importance weights via a principled approach, rather than resorting to heuristics/interpretations which are often \textit{sub-optimal}?} 


\noindent Moreover, as observed in \cite{tian2021analysis}, another critical limitation of the transitional AT method is that it suffers a severe {\em non-uniform} performance across the empirical distribution. For example, while the average robust performance of the AT method on the CIFAR10 dataset can be as high as $49\%$, the robust accuracy for the weakest class is as low as $14\%$, which depicts a huge disparity in robust performance across different classes. We note that such a non-uniform performance across classes is also slightly observed in the standard training with clean data, but its severity is much worsened in adversarial training (see \Cref{fig:dist}). Indeed, this is a critical limitation that requires special attention as, in a real-world situation, a more intelligent attacker can, in fact, decide which examples to attack so as to achieve a much higher success rate (e.g., $87\%$ when attacking the most vulnerable class). 
This non-uniform robust performance is even worsened in the case of imbalanced training distributions \cite{wu2021adversarial,wang2022imbalanced}, where the robust performance for the most vulnerable class can be as low as $0\%$. 
This motivates our second question given below:

\textit{Can such an issue of non-uniform performance particularly over imbalanced datasets be addressed at the instance level simultaneously as we design the importance weights to address the first question?}

\noindent In this paper, we propose a novel doubly robust instance reweighted optimization approach to address both of the above questions. 
\subsection{Our Contributions}
\textbf{(A novel principled framework for instance reweighted AT)}~
In order to determine the instance weights for AT in a theoretically grounded way, we propose a novel doubly robust instance reweighted optimization framework,
based on  distributionally robust optimization (DRO) \cite{rahimian2019distributionally,qian2019robust} and bilevel optimization  \citep{zhang2022revisiting,pedregosa2016hyperparameter,grazzi2020iteration}. 
Through building a model that is robust not only to the adversarial attacks but also to the worst-case instance weight selections, our framework (a) enjoys better robust performance than existing instance-reweighted schemes based on heuristic/geometric techniques \cite{zhang2020geometry,liu2021probabilistic,zeng2021adversarial} as well as tradtional AT baselines \cite{madry2017towards}; and (b) addresses the non-uniform issues \cite{tian2021analysis,pethick2023revisiting} of traditional AT by carefully optimizing 
the instance weights so as to boost the robust performance of the most vulnerable examples. Moreover, the proposed framework can be reformulated into a new finite-sum compositional bilevel optimization problem (CBO), which can be of great interest to the optimization community on its own.


\noindent \textbf{(A novel algorithm with theoretical guarantee)}~ 
Solving the proposed doubly robust optimization problem is technically challenging, including the non-differentiability of the optimizer for the constrained inner level problem and the biased hypergradient estimation for the compositional outer level problem. To tackle these challenges, we first propose a penalized reformulation based on the log-barrier penalty method, and then develop a novel algorithm which exploits the implicit function theorem and keeps track of a running average of the outer level composed function values. Our algorithm not only leads to a robust model for the proposed instance reweighted optimization problem but also provides a solution to the generic compositional bilevel optimization problem. Under widely adopted assumptions in the bilevel \citep{grazzi2020bo,ji2021bo,rajeswaran2019meta,ji2021lower} and compositional optimization \cite{wang2017stochastic,chen2021solving,lian2017finite,blanchet2017unbiased,devraj2019stochastic} literature, we further establish the convergence guarantee for the proposed algorithm.


\noindent \textbf{(Strong experimental performance)} Experiments on several balanced and imbalanced image recognition datasets demonstrate the effectiveness of our proposed approach. In particular, on CIFAR10 our approach yields +3.5\% improvement in overall robustness against PGD attacks \cite{madry2017towards} with most of it coming from boosting robustness on vulnerable data points. 

\subsection{Related Work}
\textbf{Adversarial training for robust learning.} 
Adversarial training (AT) \cite{madry2017towards,athalye2018obfuscated,carmon2019unlabeled} was proposed for training deep neural networks robust to malicious adversarial attacks \cite{goodfellow2014explaining,tramer2017space}. In particular, \cite{madry2017towards} introduced a generic AT framework based on minimax optimization with the goal of minimizing the training loss of the worst-case adversarial data for the training distribution. 
However, despite AT method being still considered as one of the most powerful defense strategies, \cite{rice2020overfitting} highlights a severe decrease in robust performance of traditional AT when training is not stopped early, a phenomenon they dubbed \textit{robust overfitting}. Several extensions of the standard AT method have been proposed to mitigate this intriguing problem, such as data augmentation-based techniques \cite{rebuffi2021data,gowal2021improving}, or smoothing-based methods \cite{chen2021robust,yang2020adversarial,yang2020closer}. \cite{zhang2019theoretically} proposed a theoretically grounded objective for AT to strike a balance between robust and natural performance. 
However, those methods suffer a severe non-uniform performance across classification categories, as observed in \cite{tian2021analysis}. Our proposed framework helps mitigate this drawback by carefully optimizing for the most vulnerable data points. 



\noindent \textbf{Instance reweighted adversarial training.}
Another line of works \cite{zhang2020geometry, liu2021probabilistic,zeng2021adversarial,ding2018mma} assign an importance weight to each data point in the empirical distribution and minimize the weighted adversarial losses. This has been shown to significantly boost the performance of AT due to more careful exploitation of the limited capacity of large deep neural networks to fit highly adversarial data, and helps overcome robust overfitting to some extent \cite{zhang2020geometry}. 
For example, in the geometry-aware adversarial instance reweighted adversarial training (GAIRAT) \cite{zhang2020geometry} method, the instance weight is computed based on the minimum number of PGD \cite{madry2017towards} steps required to generate a mis-classified adversarial example. \cite{liu2021probabilistic} leverages probabilistic margins to compute weights. Existing approaches for instance reweighted AT are, however, all based on heuristics/geometric intuitions to determine the weights. In this paper, we propose a principled approach to instance-reweighted AT by exploiting robust optimization techniques \cite{qian2019robust,rahimian2019distributionally}. 

\noindent Instance reweighting has also been used in the context of domain adaptation \cite{jiang2007instance}, data augmentation \cite{yi2021reweighting}, and imbalanced classification \cite{ren2018learning}. By determining the instance weights in a more principled way, our method also has the potential to be applied to these contexts, which we leave as future work.

\noindent\textbf{Bilevel optimization.} Bilevel optimization is a powerful tool to study many machine learning applications such as hyperparameter optimization \citep{franceschi2018bilevel,shaban2019truncated}, meta-learning \citep{bertinetto2018meta,franceschi2018bilevel,rajeswaran2019meta,ji2020convergence,liu2021investigating}, neural architecture search \citep{liu2018darts,zhang2021idarts}, etc. Existing approaches are usually approximate implicit differentiation (AID) based \citep{domke2012opt,pedregosa2016hyperparameter,gould2016differentiating,liao2018opt,lorraine2020ho}, or iterative differentiation (ITD) based \citep{domke2012opt,maclaurin2015gradient,franceschi2017forward,finn2017model,shaban2019truncated,rajeswaran2019meta,liu2020generic}. The convergence rates of these methods have been widely established \citep{grazzi2020bo,ji2021bo,rajeswaran2019meta,ji2021lower}.
Bilevel optimization has been leveraged in adversarial training very recently, which provides a more generic framework by allowing independent designs of the inner and outer level objectives \cite{zhang2022revisiting}.
However, none of these studies investigated bilevel optimization when the outer objective is in the form of compositions of functions. In this work, we introduce the compositional bilevel optimization problem as a novel pipeline for instance reweighted AT, and establish its first known convergence rate. 


\noindent\textbf{Stochastic compositional optimization.}
Stochastic compositional optimization (SCO) deals with the minimization of compositions of stochastic functions. \cite{wang2017stochastic} proposed the compositional stochastic gradient descent (SCGD) algorithm as a pioneering method for SCO problems and established its convergence rate. Many extentions of SCGD have been proposed with improved rates, including accelerated and adaptive SCGD methods \cite{wang2016accelerating,tutunov2020compositional}, and variance reduced SCGD methods \cite{lian2017finite,blanchet2017unbiased,lin2020improved,devraj2019stochastic,hu2019efficient}. A SCO reformulation has also been used to solve nonconvex distributionally robust optimization (DRO) \cite{rahimian2019distributionally,qian2019robust} problems. The problem studied in this paper naturally falls into a new class of problems but with an additional inner loop compared to the existing single-level SCO problem, which we refer to as compositional bilevel optimization (CBO). 

\section{Preliminary on AT} 
\textbf{Traditional AT.}
The traditional adversarial training (AT) \cite{madry2017towards} framework is formulated as the following minimax optimization problem over the training dataset $\mathcal{D}=\{(x_i, y_i)\}_{i=1}^M$ 
{\small
\begin{align} \label{eq:at}
    \underset{\theta}\min ~ \frac{1}{M} \sum_{i=1}^M \max_{\delta \in \mathcal{C}} \ell(x_i+\delta, y_i; \theta), 
\end{align} }%
where $\ell(x_i+\delta, y_i; \theta)$ is the loss function on the adversarial input $x_i + \delta$, $\mathcal{C}$ is the treat model that defines the constraint on the adversarial noise $\delta$, and $\theta \in \mathbb{R}^d$ corresponds to the model parameters. Thus, the traditional AT builds robust models by optimizing the parameters $\theta$ for the average worst-case adversarial loss $\ell(x_i+\delta, y_i; \theta)$ over the training dataset $\mathcal{D}$. A natural solver for the problem in \Cref{eq:at} is the AT algorithm \cite{madry2017towards}, where 1) the projected gradient descent (PGD) \cite{madry2017towards} method is first adopted to approximate the worst-case adversarial noise $\delta$ and 2)  an outer minimization step is performed on the parameters $\theta$ using stochastic gradient descent (SGD) methods. However, the traditional AT is known to consume tremendous amount of model capacity due to its overwhelming smoothing effect of natural data neighborhoods \cite{zhang2020geometry}. In other words, the traditional AT robustifies models by making decision boundaries far away from natural data points so that their adversarial counterparts are still correctly classified (i.e., do not cross the decision boundary), and thus requires significantly more model capacity compared to the standard training on clean data. 




\noindent \textbf{Instance Reweighted AT.} 
The geometry-aware approach in \cite{zhang2020geometry} introduces a new line of methods that {\bf reweights} the adversarial loss on each individual data point in order to address the drawback of traditional AT. The key motivation is that distinct data points are unequal by nature and should be treated differently based on how important they participate on the selection of decision boundaries. Hence, the learning objective of the geometry-aware instance-reweighted adversarial training (GAIRAT) method as well as its variants \cite{zhang2020geometry,liu2021probabilistic,zeng2021adversarial} can be written as 
{\small
\begin{align} \label{eq:wat} 
  \underset{\theta}\min ~ \sum_{i=1}^M w_i \max_{\delta \in \mathcal{C}_i} \ell(x_i+\delta, y_i; \theta) \quad \textrm{with} \quad \sum_{i=1}^M w_i = 1 \textrm{  and  } w_i \geq 0, 
\end{align}}%
where the constraints on the weights vector $w =(w_1, ..., w_M)^\top$ are imposed in order to make \Cref{eq:wat} consistent with the original objective in \Cref{eq:at}.  
This framework assumes that the weight vector $w =(w_1, ..., w_M)^\top$ can be obtained separately and the goal is only to optimize for $\theta$ once an off-the-shelf technique/heuristic can be used to compute $w$. 
Intuitively, the key idea driving the weight assignments in instance reweighted methods is that
larger weights should be assigned to the
training examples closer to the decision boundaries, whereas the ones that are far away should have smaller weights because they are less important in determining the boundaries. The major difference among the existing instance reweighted AT methods lies in the heuristics used to design/compute the instance weights $w_i, i=1, ..., M$. However, none of those methods adopt a scheme that is theoretically grounded, nor does the formulation in \Cref{eq:wat} provide a way of determining those weights. 

\noindent\textbf{Bilevel Optimization Formulation for AT.} 
Along a different line, bilevel optimization has recently been leveraged to develop a more powerful framework for adversarial training \cite{zhang2022revisiting}:
{\small
\begin{align} \label{eq:atblo}
    \underset{\theta}\min ~ \frac{1}{M} \sum_{i=1}^M \ell(x_i+\delta^*_i(\theta), y_i; \theta) \quad \textrm{s.t.} \quad  \delta^*_i(\theta) =\argmin_{\delta \in \mathcal{C}_i} \ell'(x_i+\delta,y_i; \theta),
\end{align}}
where for each data point $(x_i, y_i)$, $\delta^*_i(\theta)$ represents some worst-case/optimal adversarial noise under the attack loss function $\ell'(\cdot; \theta)$. 
Such a bilevel optimization formulation of AT has key advantages over the traditional framework in \Cref{eq:at}. First,  the traditional AT can be recovered by setting the attack objective to be the negative of the training objective, i.e., $\ell'(\cdot; \theta) = -\ell(\cdot; \theta)$. Second, the bilevel formulation gives one the flexibility to separately design the inner and outer level objectives, $\ell'$ and $\ell$, respectively. These key advantages make the formulation in \Cref{eq:atblo} a more generic and powerful framework than the one in \Cref{eq:at}. As we will see next, this enables us to independently construct a new outer level objective that also solves for the instance weights $w$, and an inner level objective for regularized attack. 
\section{Proposed Framework for Instance Reweighted AT}
\subsection{DONE: \underline{D}oubly R\underline{o}bust I\underline{n}stance R\underline{e}weighted AT} 


Using the bilevel formulation for AT in Eq. \eqref{eq:atblo}, we can incorporate the instance reweighted idea as 
{\small
\begin{align} \label{eq:watblo}
    \underset{\theta}\min ~ \sum_{i=1}^M w_i \ell(x_i+\delta^*_i(\theta), y_i; \theta) \textrm{ s.t. } \delta^*_i(\theta) =\argmin_{\delta \in \mathcal{C}_i} \ell'(x_i+\delta,y_i; \theta) \textrm{ with } \sum_{i=1}^M w_i = 1 \textrm{ and } w_i \geq 0. 
\end{align} }%

Based on bilevel optimization and distributionally robust optimization (DRO), we next propose a new framework for AT which determines the weights $w$ in  a more principled way rather than using heuristic methods. Specifically, by letting $w$ maximize the weighted sum of the adversarial losses $\ell (x_i+\delta^*_i(\theta), y_i; \theta), i=1, ..., M$, we seek to build a model in the outer level problem that is robust not only to the adversarial attacks but also to the worst-case attack distribution:
{\small
\begin{align} \label{eq:droblo1}
    \underset{\theta}\min ~ { \underset{w \in \mathcal{P}}{\max}}  ~ \sum_{i=1}^M w_i \ell(x_i+\delta^*_i(\theta), y_i; \theta) { - r\sum_{i=1}^M w_i\log(Mw_i)} \quad \textrm{s.t.} \quad \delta^*_i(\theta) =\argmin_{\delta \in \mathcal{C}_i} \ell'(x_i+\delta,y_i; \theta), 
\end{align}}%
where $\mathcal{P}$ represents the probability simplex, i.e., $\mathcal{P} = \{w \in \mathbb{R}^M: \sum_{i=1}^M w_i = 1 \textrm{  and  } w_i \geq 0\}$, 
and the term $r\sum_{i=1}^M w_i\log(Mw_i)$ in the outer level objective captures the KL-divergence between $w$ and the uniform weight distribution, which is a widely adopted choice of regularizer in the DRO literature \cite{rahimian2019distributionally}. Note that the regularization parameter $r>0$ controls the tradeoff between two extreme cases: 1) $r=0$ leads to an un-regularized problem (as we comment below), and 2) $r \rightarrow \infty$ yields $w_i \rightarrow \frac{1}{M}$, and hence, we recover the average objective in \Cref{eq:at}. Such a regularizer is introduced to promote the balance between the uniform and worst-case weights $w$; otherwise the outer level objective in \Cref{eq:droblo1} becomes linear in weights vector $w$, which makes the solution of the `$\max$' problem to be trivially a one-hot vector $w$ (where the only `1' is at index $i$ with the largest adversarial loss), and in practice, such a trivial one-hot vector $w$ makes the optimization routine unstable and usually hurts generalization to the training distribution \cite{qian2019robust,wang2021adversarial}. 

Overall, the formulation in \Cref{eq:droblo1} becomes a {\bf doubly robust} bilevel optimization: (a) the inner level finds the worst-case noise $\delta$ in order to make the model parameters $\theta$ robust to such adversarial perturbation of data input; and (b) the outer level finds the worst-case reweighting first so that the optimization over the model $\theta$ can focus on those data points with high loss values, i.e., the optimization over $\theta$ is over the worst-case  adversarial losses.


\subsection{An Equivalent Compositional Bilevel Optimization Problem}

An important consequence of choosing the KL-divergence as the regularizer is that the $\max$ problem in the outer objective of \Cref{eq:droblo1} admits a unique solution $w^*(\theta)$ (see \cite{qi2021online} for proof), which has its $i$-the entry given by $w^*_i(\theta) = \exp\left(\frac{\ell_i(\theta, \delta^*_i(\theta))}{r}\right) / \sum_j \exp\left(\frac{\ell_j(\theta, \delta^*_j(\theta))}{r}\right)$. Here we denote $\ell_i(\theta, \delta^*_i(\theta)) = \ell(x_i+\delta^*_i(\theta), y_i; \theta)$. Substituting this optimal weights vector $w^*(\theta)$ back in \Cref{eq:droblo1} yields the following equivalent optimization problem 
{\small
\begin{align} \label{eq:cbloat}
    &\underset{\theta}\min ~ r\log \left(\frac{1}{M} \sum_{i=1}^M \exp\left(\frac{\ell_i(\theta, \delta^*_i(\theta))}{r}\right) \right) \quad \textrm{s.t.} \quad \delta^*_i(\theta) =\argmin_{\delta \in \mathcal{C}_i} \ell_i'(\theta, \delta). 
\end{align}}%
Problem (\ref{eq:cbloat}) is, in fact to the best of our knowledge, a novel optimization framework, 
which we define as a compositional bilevel optimization problem. Without the inner level problem, stochastic algorithms with known convergence behaviors have been devised for the single-level compositional problem. 
Nevertheless, directly solving problem (\ref{eq:cbloat}) suffers from several key technical challenges. In particular, the fact that the minimizer of the inner level constrained problem in \Cref{eq:cbloat} may not be differentiable w.r.t. to the model parameter $\theta$ prevents the usage of implicit differentiation for solving the bilevel optimization problem. 

To tackle this challenge, we propose a penalized reformulation based on the log-barrier penalty method.
More specifically, we consider  $\ell_\infty$-norm based attack constraint given by $\mathcal{C} = \{ \delta \in \mathbb{R}^p: \big\|\delta\big\|_\infty \leq \epsilon, x+\delta \in [0,1]^p\}$ for radius $\epsilon>0$ and input $x \in \mathbb{R}^p$. In this case, the constraint set $\mathcal{C}$ can be written in the form of linear constraint $A\delta \leq b$ with $A = \big(I_p, -I_p\big)^\top \in \mathbb{R}^{2p \times p}$ and $b=\big(\min (\epsilon \mathbf{1}_p, \mathbf{1}_p-x), \min (\epsilon \mathbf{1}_p, x)\big)^\top \in \mathbb{R}^{2p}$. With this, we can reformulate the inner problem in \Cref{eq:cbloat} as $\delta^*_i(\theta) =\argmin_{\{A_i \delta \leq b_i\}} \ell_i'(\theta, \delta)$, where $A_i$ and $b_i$ are realizations of aforementioned $A$ and $b$ for input $x_i$. By using the log-barrier penalty method to penalize the linear constraint into the attack objective, the optimization problem (\ref{eq:cbloat}) becomes 
{\small
\begin{align}\label{eq:bar}
    \underset{\theta}\min ~ \mathcal{L}(\theta) \coloneqq r\log \left(\frac{1}{M} \sum_{i=1}^M \exp\left(\frac{\ell_i(\theta, \hat \delta^*_i(\theta))}{r}\right) \right) \text { s.t. } \hat \delta^*_i(\theta) = \argmin_{\delta \in \mathcal{C}_i} \ell_i^{bar}(\theta, \delta),
\end{align}}%
where $\ell_i^{bar}(\theta, \delta) \coloneqq \ell_i'(\theta, \delta) - c \sum_{k=1}^{2p} \log (b_k - \delta^\top a_k)$, $a_k$ denotes the $k$-th row of matrix $A_i$ and $b_k$ is the $k$-th entry of vector $b_i$. Note that now the constraint $\{\delta \in \mathcal{C}_i\}$ is never binding in \Cref{eq:bar}, because the log-barrier penalty forces the minimizer of $\ell_i^{bar}(\theta, \delta)$ to be strictly inside the constraint set. Based on this, we show that the minimizer $\hat \delta^*_i(\theta)$ becomes differentiable, i.e., $\frac{\partial \hat \delta^*_i(\theta)}{\partial \theta}$ exists when $\ell_i'(\theta, \delta)$ is twice differentiable and under some mild conditions. With the smoothness of $\hat \delta^*_i(\theta)$, we also provide the expression of the gradient $\nabla \mathcal{L}(\theta)$ in the following proposition.  
\begin{proposition}\label{prop:igrad}
Let $\ell_i'(\theta, \delta)$ be twice differentiable. Define $\gamma_k = 1 / (b_k -a_k^\top \hat \delta^*_i(\theta))^2$, $k=1,..., 2p$ and diagonal matrix $C_i(\theta) = c \operatorname{diag}\big(\gamma_1 + \gamma_{p+1}, \gamma_2 + \gamma_{p+2}, ..., \gamma_p + \gamma_{2p} \big)$. 
If $\nabla_\delta^2 ~\ell_i'(\theta, \hat \delta^*_i(\theta)) + C_i(\theta)$ is invertible, then the implicit gradient $\frac{\partial \hat \delta^*_i(\theta)}{\partial \theta}$ exists and we have 
{\small
\begin{align}
    \nabla \mathcal{L}(\theta) = \frac{r \sum_{i=1}^M \Big( \nabla_{\theta}~g_i(\theta, \hat \delta^*_i(\theta)) - \nabla_{\theta \delta}~\ell_i'(\theta, \hat \delta^*_i(\theta)) \big[\nabla_\delta^2 ~\ell_i'(\theta, \hat \delta^*_i(\theta)) + C_i(\theta)\big]^{-1} \nabla_{\delta}~g_i(\theta, \hat \delta^*_i(\theta)) \Big) }{\sum_{i=1}^M g_i(\theta, \hat \delta^*_i(\theta))}, \nonumber 
\end{align}}%
where $g_i(\theta, \hat \delta^*_i(\theta)) = \exp\left(\frac{\ell_i(\theta, \hat \delta^*_i(\theta))}{r}\right)$. 
\end{proposition}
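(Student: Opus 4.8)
The plan is to establish the proposition in two stages: first show that $\hat\delta^*_i(\theta)$ is differentiable via the implicit function theorem applied to the first-order optimality condition of the (unconstrained, thanks to the log-barrier) inner problem, and then differentiate the outer objective $\mathcal{L}(\theta)$ by the chain rule, substituting the expression for $\partial\hat\delta^*_i(\theta)/\partial\theta$ obtained in the first stage.

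For the first stage, observe that since the log-barrier penalty blows up at the boundary of $\mathcal{C}_i$, the minimizer $\hat\delta^*_i(\theta)$ lies in the interior, so it satisfies the stationarity condition $\nabla_\delta \ell_i^{bar}(\theta,\hat\delta^*_i(\theta)) = 0$. First I would compute $\nabla_\delta \ell_i^{bar}(\theta,\delta) = \nabla_\delta \ell_i'(\theta,\delta) + c\sum_{k=1}^{2p} a_k / (b_k - a_k^\top\delta)$, and then differentiate this identity with respect to $\delta$ to get the Hessian $\nabla_\delta^2 \ell_i^{bar}(\theta,\delta) = \nabla_\delta^2 \ell_i'(\theta,\delta) + c\sum_{k=1}^{2p} a_k a_k^\top / (b_k - a_k^\top\delta)^2$. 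The key computation here is recognizing that because $A_i = (I_p, -I_p)^\top$, we have $a_k = e_k$ for $k\le p$ and $a_k = -e_{k-p}$ for $k>p$, so each $a_k a_k^\top$ is a diagonal matrix $e_j e_j^\top$; summing the relevant reciprocal-squared-slack terms collapses exactly into the diagonal matrix $C_i(\theta)$ as defined. Hence $\nabla_\delta^2 \ell_i^{bar}(\theta,\hat\delta^*_i(\theta)) = \nabla_\delta^2 \ell_i'(\theta,\hat\delta^*_i(\theta)) + C_i(\theta)$, which is assumed invertible; then the implicit function theorem applied to $\nabla_\delta \ell_i^{bar}(\theta,\delta)=0$ gives that $\hat\delta^*_i(\theta)$ is $C^1$ with
\[
\frac{\partial \hat\delta^*_i(\theta)}{\partial\theta} = -\big[\nabla_\delta^2 \ell_i'(\theta,\hat\delta^*_i(\theta)) + C_i(\theta)\big]^{-1} \nabla_{\delta\theta}\ell_i'(\theta,\hat\delta^*_i(\theta)),
\]
using that the barrier term does not depend on $\theta$ so its mixed partial vanishes.

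For the second stage, write $\mathcal{L}(\theta) = r\log\big(\frac1M\sum_i g_i(\theta,\hat\delta^*_i(\theta))\big)$ with $g_i = \exp(\ell_i/r)$. Differentiating the outer $\log$ gives the denominator $\sum_i g_i(\theta,\hat\delta^*_i(\theta))$ (the factor $r$ and the $1/M$ cancel appropriately), and differentiating each $g_i$ through the total derivative in $\theta$ yields $\nabla_\theta g_i + (\partial\hat\delta^*_i/\partial\theta)^\top \nabla_\delta g_i$. Substituting the implicit-gradient formula from stage one, noting that $(\partial\hat\delta^*_i/\partial\theta)^\top = -\nabla_{\theta\delta}\ell_i'\,[\nabla_\delta^2\ell_i' + C_i]^{-1}$ (using symmetry of the Hessian and that $\nabla_{\delta\theta}\ell_i'$ is the transpose of $\nabla_{\theta\delta}\ell_i'$), and folding the leading $r$ back in, gives exactly the stated expression for $\nabla\mathcal{L}(\theta)$.

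The main obstacle is not any single heavy computation but rather carefully justifying the interchange of differentiation and the application of the implicit function theorem: one must confirm that $\hat\delta^*_i(\theta)$ is indeed the unique interior stationary point (so that the implicit function it defines is globally the minimizer, not just locally), that the requisite twice-differentiability of $\ell_i'$ together with invertibility of $\nabla_\delta^2\ell_i' + C_i$ holds in a neighborhood (not merely at the point), and that $\gamma_k = 1/(b_k - a_k^\top\hat\delta^*_i(\theta))^2$ is well-defined, i.e., the slacks $b_k - a_k^\top\hat\delta^*_i(\theta)$ are strictly positive — which is precisely guaranteed by the non-binding-constraint property of the log-barrier noted just before the proposition. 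The remaining bookkeeping (correct placement of transposes in $\nabla_{\theta\delta}$ versus $\nabla_{\delta\theta}$, and tracking the $r$ and $1/M$ constants) is routine.
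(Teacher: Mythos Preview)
Your proposal is correct and follows essentially the same approach as the paper's own proof: both apply the implicit function theorem to the stationarity condition $\nabla_\delta \ell_i^{bar}(\theta,\hat\delta^*_i(\theta))=0$, exploit the structure $A_i=(I_p,-I_p)^\top$ to collapse $c\sum_k \gamma_k a_k a_k^\top$ into the diagonal matrix $C_i(\theta)$, and then combine the resulting implicit-gradient formula with the chain rule applied to $\mathcal{L}(\theta)=r\log\big(\tfrac{1}{M}\sum_i g_i\big)$. The only difference is the order of presentation (you derive the implicit gradient first and then the outer chain rule, whereas the paper writes the chain rule first and then fills in $\partial\hat\delta^*_i/\partial\theta$), which is immaterial.
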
 
\Cref{prop:igrad} provides the expression of the total gradient $\nabla \mathcal{L}(\theta)$, which is useful for practical implementation of implicit differentiation based algorithms for problem (\ref{eq:cbloat}). Moreover, as in \cite{zhang2022revisiting}, when $\ell_i'(\theta, \cdot)$ is  modeled by a ReLU-based deep neural network, the hessian $\nabla_\delta^2 ~\ell_i'(\theta, \delta)$ w.r.t. input $\delta$ can be safely neglected due to the fact that ReLU network generally lead to piece-wise linear decision boundaries w.r.t. its inputs \cite{moosavi2019robustness,alfarra2022decision}, i.e., $\nabla_\delta^2 ~\ell_i'(\theta, \delta) \approx 0$. Further, the diagonal matrix $C_i(\theta)$ can be efficiently inverted. Hence, in order to approximate $\nabla \mathcal{L}(\theta)$, we only need Jacobian-vector product computations which can be efficiently computed using existing automatic differentiation packages.

\subsection{Compositional Implicit Differentiation (CID)}


To solve our reformulated problem (\ref{eq:bar}) for AT, we consider the following generic compositional bilevel optimization problem, which can be of great interest to the optimization community: 
{\small
\begin{align} \label{eq:cblo} 
&\underset{\theta}\min \hspace{2pt} F(\theta):= f\left(g \left(\theta, \delta^{*}(\theta) \right)\right) = f\left(\frac{1}{M} \sum_{i=1}^{M} g_i \left(\theta, \delta_i^{*}(\theta) \right)\right) \\ 
&\text {s.t. } \delta^{*}(\theta) = \left(\delta^{*}_1(\theta), ..., \delta^{*}_M(\theta)\right) = \underset{\left(\delta_1, ..., \delta_M\right) \in \mathcal{V}_1 \times ...\times \mathcal{V}_M}{\argmin} \frac{1}{M} \sum_{i=1}^{M} h_i\left(\theta, \delta_i \right), \nonumber
\end{align}}%
which can immediately recover problem (\ref{eq:bar}) by setting $g_i = \exp\left(\frac{\ell_i(\theta, \hat \delta^*_i(\theta))}{r}\right)$, $h_i = \ell_i'(\theta, \delta) - c \sum_{k=1}^{2p} \log (b_k - \delta^\top a_k)$, and the constraint set $\mathcal{V}_i=\mathcal{C}_i$. 
Here the outer functions $g_i(\theta, \delta): \mathbb{R}^d \times \mathbb{R}^p \rightarrow \mathbb{R}^m$ and $f(z): \mathbb{R}^m \rightarrow \mathbb{R}$ are generic nonconvex and continuously differentiable functions. The inner function $h_i(\theta, \delta): \mathbb{R}^d \times \mathcal{V}_i \rightarrow \mathbb{R}$ is a twice differentiable and admits a unique minimizer in $\delta$, $\mathcal{V}_i$ is a convex subset of $\mathbb{R}^p$ that is assumed to contain the minizers $\delta^*_i(\theta)$. 
We collect all inner loop minimizers into a single vector $\delta^{*}(\theta)$. 
The goal is to minimize the total objective function $F(\theta): \mathbb{R}^d \longrightarrow \mathbb{R}$, \emph{which not only leads to a robust model for our instance reweighted optimization problem (\ref{eq:bar}) but also provides a solution to the generic compositional bilevel optimization problem}.

As alluded earlier, solving the compositional bilevel optimization problem is nontrivial. More specifically, it can be shown that
 the gradient of the total objective is $\nabla F(\theta) = \frac{\partial g \left(\theta, \delta^{*}(\theta) \right)}{\partial \theta} \nabla f\left(g \left(\theta, \delta^{*}(\theta) \right)\right)$ by applying the chain rule. Due to the fact that $\nabla f(\cdot)$ needs to be evaluated at the full value $g \left(\theta, \delta^{*}(\theta) \right)$, standard stochastic gradient descent methods cannot be naively applied here. The reason is that even if we can obtain the unbiased estimates $g_i \left(\theta, \delta^{*}_i(\theta) \right)$, the product 
$\frac{\partial g_i \left(\theta, \delta^{*}_i(\theta) \right)}{\partial \theta} \nabla f\left(g_i \left(\theta, \delta^{*}_i(\theta) \right)\right)$ would still be biased, unless $f(\cdot)$ is a linear function. This key difference makes problem (\ref{eq:cblo}) particularly challenging and sets it apart from the standard finite-sum bilevel optimization problem in which the total objective is linear w.r.t. the sampling probabilities $\frac{1}{M}$.

To design a theoretically grounded algorithm for problem (\ref{eq:cblo}), note that the stochastic compositional gradient descent (SCGD) \cite{wang2017stochastic} algorithm for the single-level compositional optimization problem  keeps track of a running average of the composed function evaluations during the algorithm running. Inspired by SCGD, we propose a novel algorithm (see \Cref{alg:cid}) that exploits the implicit differentiation technique to deal with the bilevel aspect of problem (\ref{eq:cblo}). 
Using the implicit function theorem, we can obtain 
{\small
\begin{align}\label{eq:ift}
    \frac{\partial g_i\left(\theta, \delta^{*}_i(\theta) \right)}{\partial \theta} = \nabla_{\theta} g_i\left(\theta, \delta^{*}_i(\theta) \right) - \nabla_{\theta} \nabla_{\delta} h_i\left(\theta, \delta^{*}_i(\theta) \right) v_i^*, 
\end{align}}%
with each $v_i^*$ being the solution of the linear system $ \nabla_{\delta}^2 h_i\left(\theta, \delta^{*}_i(\theta) \right) v = \nabla_{\delta} g_i\left(\theta, \delta^{*}_i(\theta) \right)$. 

Specifically, at each step $t$, the algorithm first samples a batch $\mathcal{B}$ of cost functions $\{(g_i, h_i)\}$ and applies $K$ steps of projected gradient descent to obtain $\delta^K_{i}(\theta_t)$ as an estimate of the minimizer $\delta^*_{i}(\theta_t)$ of each $h_i(\theta_t, \cdot)$ in $\mathcal{B}$. Then, the algorithm computes 
an approximation $\widehat{\nabla} g_i(\theta_t, \delta_i^K(\theta_t))$ of the stochastic gradient sample $\frac{\partial g_i\left(\theta_t, \delta^{*}(\theta_t) \right)}{\partial \theta}$ by replacing each $\delta^*_{i}(\theta_t)$ with $\delta^K_{i}(\theta_t)$ in \Cref{eq:ift}. The running estimate $u_t$ of $\frac{\partial g\left(\theta, \delta^{*}(\theta) \right)}{\partial \theta}$ and the parameters $\theta$ will be next updated as follows
{\small
\begin{align}\label{eq:update}
    u_{t+1} = (1-\eta_t)u_{t} + \frac{\eta_t}{|\mathcal{B}|}\sum_{i=1}^{|\mathcal{B}|} g_i(\theta_t, \delta_i^K(\theta_t)) \textrm{ and }
    \theta_{t+1} = \theta_{t} - \frac{\beta_t}{|\mathcal{B}|}\sum_{i=1}^{|\mathcal{B}|} \widehat{\nabla} g_i(\theta_t, \delta_i^K(\theta_t)) \nabla f(u_{t+1}). 
\end{align}
}%
Note that we will refer the instantiation of \Cref{alg:cid} for solving the instance reweighted problem (\ref{eq:bar}) as DONE (which stands for \underline{D}oubly R\underline{o}bust I\underline{n}stance R\underline{e}weighted AT). 

\begin{algorithm}[t]
	\caption{Compositional Implicit Differentiation (CID)}
	\small
	\label{alg:cid}
	\begin{algorithmic}[1]
		\STATE {\bfseries Input:} stepsizes $\alpha, \{\beta_t\}, \{\eta_t\}$, initializations $\theta_0 \in \mathbb{R}^d$, $\delta^0 \in \mathbb{R}^p$, and $u_0 \in \mathbb{R}^m$. 
		\FOR{$k=0,1,2,...,T-1$}
		\STATE{Draw a minibatch of cost functions $\mathcal{B} = \{(g_i, h_i)\}$}
        \FOR{each $(g_i, h_i) \in \mathcal{B}$ (in parallel)} 
        \FOR{$k=1,...,K$}
        \STATE{Update $\delta^{k}_{i,t} = \Pi_{\mathcal{C}} \left( \delta^{k-1}_{i,t} - \alpha\nabla_\delta h_i(\theta_t, \delta^{k-1}_{i,t})\right)$}
		\ENDFOR
        \STATE{Compute sample gradient estimate $\widehat{\nabla} g_i(\theta_t, \delta_{i,t}^K)$ as in \Cref{eq:ift} by replacing $\delta_{i}^*(\theta_t)$ with $\delta_{i,t}^K$} 
		\ENDFOR
        \STATE{Compute $g(\theta_t, \delta_{t}^K; \mathcal{B}) = \frac{1}{|\mathcal{B}|}\sum_{i=1}^{|\mathcal{B}|} g_i(\theta_t, \delta_{i,t}^K)$ and $\widehat \nabla g(\theta_t, \delta_{t}^K; \mathcal{B}) = \frac{1}{|\mathcal{B}|}\sum_{i=1}^{|\mathcal{B}|} \widehat \nabla g_i(\theta_t, \delta_{i,t}^K)$} 
        \STATE{Update $u_{t+1} = (1-\eta_t)u_{t} + \eta_t g(\theta_t, \delta_{t}^K; \mathcal{B})$}
        \STATE{Update $\theta_{t+1} = \theta_{t} - \beta_t \widehat \nabla g(\theta_t, \delta_{t}^K; \mathcal{B}) \nabla f(u_{t+1})$}
        \ENDFOR
	\end{algorithmic}
\end{algorithm}

\subsection{Convergence Analysis of CID}
In the following, we establish the convergence rate of the proposed CID algorithm under widely adopted assumptions in bilevel and compositional optimization literatures (see \Cref{app:theory} for the statement of assumptions). 
\begin{theorem} \label{thr}
Suppose that Assumptions \ref{ass:lip}, \ref{ass:h}, \ref{ass:bdv} (which are given in Appendix) hold. Select the stepsizes as $\beta_t =\frac{1}{\sqrt{T}}$ and $\eta_t \in [\frac{1}{2}, 1)$, and batchsize as $\mathcal{O} (T)$. Then, the iterates $\theta_t, t=0,..., T-1$ of the CID algorithm satisfy 
\begin{align}
    \frac{\sum_{t=0}^{T-1} \mathbb{E} \big\| \nabla F(\theta_t) \big\|^2}{T} \leq 
    \mathcal{O} \Big( \frac{1}{\sqrt{T}} + (1-\alpha\mu)^K \Big), \nonumber 
\end{align}
\end{theorem}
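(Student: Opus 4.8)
The plan is a standard nonconvex descent analysis on the smooth objective $F$, treating the CID update direction $d_t := \widehat{\nabla} g(\theta_t,\delta_t^K;\mathcal{B})\,\nabla f(u_{t+1})$ as a biased estimate of $\nabla F(\theta_t) = \tfrac{\partial g(\theta_t,\delta^*(\theta_t))}{\partial\theta}\nabla f(g(\theta_t,\delta^*(\theta_t)))$, and then controlling the three sources of error: (i) the inner-loop error $\delta_{i,t}^K$ versus $\delta_i^*(\theta_t)$, (ii) the minibatch sampling noise, and (iii) the running-average error $u_{t+1}$ versus $g(\theta_t,\delta^*(\theta_t))$. First I would record the consequences of Assumptions~\ref{ass:lip}, \ref{ass:h} and \ref{ass:bdv}: that $F$ is $L_F$-smooth (since $f$, each $g_i$, and the solution map $\delta^*(\cdot)$ are Lipschitz and smooth, the smoothness of $\delta^*(\cdot)$ following from the implicit function theorem and the uniform invertibility and Lipschitzness of $\nabla_\delta^2 h_i$), that $\nabla f$ and the Jacobian-type quantities in \eqref{eq:ift} are bounded so $\|d_t\|\le D$ uniformly, and, crucially, that the exact implicit gradient $\tfrac{\partial g_i(\theta,\delta)}{\partial\theta}$ given by \eqref{eq:ift} is Lipschitz in $\delta$ — this last fact uses $\mu$-strong convexity of $h_i(\theta,\cdot)$ to bound $\|[\nabla_\delta^2 h_i]^{-1}\|$ together with the Lipschitzness of $\nabla_\delta^2 h_i$, $\nabla_\theta\nabla_\delta h_i$, $\nabla_\delta g_i$ and $\nabla_\theta g_i$.

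Next I would prove an inner-loop contraction lemma: since each $h_i(\theta,\cdot)$ is $\mu$-strongly convex and $L_h$-smooth, $K$ steps of projected gradient descent with stepsize $\alpha\le 1/L_h$ give $\|\delta_{i,t}^K - \delta_i^*(\theta_t)\|\le (1-\alpha\mu)^K\|\delta_{i,t}^0 - \delta_i^*(\theta_t)\|\le (1-\alpha\mu)^K D_0$, where $D_0$ bounds $\mathrm{diam}(\mathcal{V}_i)$. Combined with the Lipschitz-in-$\delta$ property of the implicit gradient and of $g_i$, this yields $\|\widehat{\nabla} g_i(\theta_t,\delta_{i,t}^K) - \tfrac{\partial g_i(\theta_t,\delta_i^*(\theta_t))}{\partial\theta}\|\le L(1-\alpha\mu)^K D_0$ and $\|g_i(\theta_t,\delta_{i,t}^K) - g_i(\theta_t,\delta_i^*(\theta_t))\|\le L(1-\alpha\mu)^K D_0$. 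Since the minibatch average of the \emph{exact} implicit gradients is unbiased for $\tfrac{\partial g(\theta_t,\delta^*(\theta_t))}{\partial\theta}$ with conditional variance $\mathcal{O}(1/|\mathcal{B}|)=\mathcal{O}(1/T)$, decomposing $d_t - \nabla F(\theta_t)$ into a Jacobian-error piece $\bigl(\widehat{\nabla} g(\theta_t,\delta_t^K;\mathcal{B}) - \tfrac{\partial g(\theta_t,\delta^*(\theta_t))}{\partial\theta}\bigr)\nabla f(u_{t+1})$ and a composition-error piece $\tfrac{\partial g(\theta_t,\delta^*(\theta_t))}{\partial\theta}\bigl(\nabla f(u_{t+1}) - \nabla f(g(\theta_t,\delta^*(\theta_t)))\bigr)$, and using boundedness of $\nabla f$ and the Jacobian together with Lipschitzness of $\nabla f$, gives $\mathbb{E}\|d_t - \nabla F(\theta_t)\|^2 \le \mathcal{O}\bigl(\tfrac{1}{T} + (1-\alpha\mu)^{2K} + \mathbb{E}\|u_{t+1} - g(\theta_t,\delta^*(\theta_t))\|^2\bigr)$.

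The crux is a tracking lemma for $\epsilon_t := u_t - y_t$ with $y_t := g(\theta_t,\delta^*(\theta_t))$. The $u$-update gives $\epsilon_{t+1} = (1-\eta_t)\epsilon_t + \eta_t\bigl(g(\theta_t,\delta_t^K;\mathcal{B}) - y_t\bigr) + (y_t - y_{t+1})$, where $\mathbb{E}[g(\theta_t,\delta_t^K;\mathcal{B})\mid\mathcal{F}_t]$ is within $\mathcal{O}((1-\alpha\mu)^K)$ of $y_t$ with conditional variance $\mathcal{O}(1/|\mathcal{B}|)$, and $\|y_t - y_{t+1}\|\le L_y\|\theta_{t+1}-\theta_t\|\le L_y\beta_t D$ because $y$ is Lipschitz in $\theta$. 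Squaring, taking expectations and applying Young's inequality with the contraction factor $1-\eta_t\le 1/2$ yields $\mathbb{E}\|\epsilon_{t+1}\|^2 \le (1-\tfrac{\eta_t}{2})\mathbb{E}\|\epsilon_t\|^2 + \mathcal{O}(\beta_t^2/\eta_t) + \mathcal{O}\bigl(\eta_t^2(1/|\mathcal{B}| + (1-\alpha\mu)^{2K})\bigr)$; since $\eta_t\ge 1/2$ the recursion contracts at rate $\le 3/4$, so summing and using $\beta_t = 1/\sqrt{T}$ and $|\mathcal{B}| = \Theta(T)$ gives $\tfrac{1}{T}\sum_t\mathbb{E}\|\epsilon_t\|^2 = \mathcal{O}(1/T + (1-\alpha\mu)^{2K})$. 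Finally I would plug into the descent inequality $F(\theta_{t+1})\le F(\theta_t) - \beta_t\langle\nabla F(\theta_t),d_t\rangle + \tfrac{L_F\beta_t^2}{2}\|d_t\|^2$, bound the cross term by $-\tfrac{\beta_t}{2}\|\nabla F(\theta_t)\|^2 + \tfrac{\beta_t}{2}\|d_t-\nabla F(\theta_t)\|^2$, telescope over $t=0,\dots,T-1$, divide by $\beta T = \sqrt{T}$, and substitute the per-step error bound and the tracking bound; this produces $\tfrac{1}{T}\sum_t\mathbb{E}\|\nabla F(\theta_t)\|^2 = \mathcal{O}(1/\sqrt{T} + (1-\alpha\mu)^{2K}) = \mathcal{O}(1/\sqrt{T} + (1-\alpha\mu)^K)$.

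The main obstacle I expect is the tracking lemma: one must handle the mutual coupling between the $\theta$-iterates and the auxiliary sequence $u_t$ — the drift term $y_t - y_{t+1}$ feeding the $u$-recursion is itself controlled only through the size of the $\theta$-step — show that the fixed lower bound $\eta_t\ge 1/2$ buys enough contraction to absorb both this drift and the sampling variance, and verify that the $\mathcal{O}(T)$ batchsize is exactly what makes the accumulated variance $\mathcal{O}(1)$ rather than $\mathcal{O}(T)$. A secondary technical point is establishing Lipschitz continuity in $\delta$ of the implicit-gradient expression \eqref{eq:ift}, which is where strong convexity and second-order smoothness of $h_i$ enter.
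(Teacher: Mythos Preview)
Your proposal is correct and follows essentially the same route as the paper: smoothness of $F$ via the implicit function theorem, a PGD contraction bound yielding the $(1-\alpha\mu)^K$ inner-loop error, an SCGD-style tracking recursion for $u_t$, and a descent-plus-telescope argument. The only organizational difference is that the paper packages the descent inequality and the tracking error into a single Lyapunov function $V_t = F(\theta_t) + \|u_t - g(\theta_{t-1},\delta_{t-1}^K)\|^2$ (tracking $u$ against $g$ at the approximate $\delta_t^K$ rather than at $\delta^*(\theta_t)$) and uses a Young inequality weighted by $\eta_t$ so that the $(1+\eta_t)(1-\eta_t)<1$ cancellation telescopes directly, whereas you first sum the tracking errors using the constant contraction from $\eta_t\ge 1/2$ and then substitute; both arrive at the same bound.
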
 
The proof can be found in \Cref{app:theory}. \Cref{thr} indicates that \Cref{alg:cid} can achieve an $\epsilon$-accurate stationary point by selecting $T=\mathcal{O}(\epsilon^{-2})$ and $K = \mathcal{O}(\log \frac{1}{\epsilon})$. 
The dependency on the batchsize can be reduced to $\mathcal{O}(\epsilon^{-1})$ by selecting $\eta_t = T^{-0.25}$, which would also lead to a higher iteration complexity of $\mathcal{O}(\epsilon^{-4})$. 

\section{Experiments} 
\subsection{Experimental Setup} 
\textbf{Datasets and Baselines.} We consider image classification problems and compare the performance of our proposed DONE method with related baselines on four image recognition datasets CIFAR10 \cite{krizhevsky2009learning}, SVHN \cite{netzer2011reading}, STL10 \cite{pmlr-v15-coates11a}, and GTSRB \cite{stallkamp2012man}. More details about the datasets can be found in the appendix. We compare against standard adversarial training methods AT \cite{madry2017towards} and FAT \cite{zhang2020attacks}, and three other state-of-the-art instance re-weighted adversarial training methods GAIRAT \cite{zhang2020geometry}, WMMR \cite{zeng2021adversarial}, and MAIL \cite{liu2021probabilistic}. We use the official publicly available codes of the respective baselines and their recommended training configurations.  
For our algorithm DONE, we consider three implementations based on how we solve the inner loop optimization: (i) DONE-GD uses simple non-sign projected gradient descent steps; (ii) DONE-ADAM employs the Adam optimizer; and
(iii) DONE-PGD adopts the projected gradient sign method. 
We run all baselines on a single NVIDIA Tesla P100 GPU. 

More details about the training and hyperparameters search can be found in \Cref{app:exp}. 

\noindent \textbf{Evaluation.} For all baselines, we report their standard accuracy on clean data (SA), the robust accuracy against 20 steps PGD attacks (RA-PGD) \cite{madry2017towards}, the robust accuracy against AutoAttacks (RA-AA) \cite{croce2020reliable}, and the RA-PGD of the 30\% most vulnerable classes (RA-Tail-30) as a measure of robustness against attacks on the most vulnerable data points. 

\subsection{Better Distribution of Robust Performance} 
We first demonstrate that our proposed doubly robust formulation can indeed achieve robust performance in a more balanced way across the empirical distribution. \Cref{fig:dist} shows the per class robust accuracy (RA-PGD) of the standard AT method and our doubly-robust approach (i.e., vanilla DONE-GD method) for both balanced and imbalanced (with an imbalance ratio of 0.2) CIFAR10 dataset. For the balanced case, our algorithm improves the robustness on all classes, meanwhile with a more significant boost on the weakest classes (\textit{cat}, \textit{deer}, and \textit{bird}). On the other hand, for the imbalanced data case, the classes with more examples (last five categories) heavily dominate the robust training dynamic. 
This consequently leads to very high robustness on those classes, but nearly zero robustness on the vulnerable classes (such as \textit{cat}). However, our method can still boost the per class RA-PGD on the weak classes (+11\% on average on the 3 most vulnerable classes) and at the same time maintain a superior average RA-PGD. Overall, the results for both balanced and imbalanced settings clearly demonstrate that our doubly-robust approach can, in fact, improve worst-case robustness and hence achieve superior average robust performance. 


\begin{figure}
    \centering
    \begin{tabular}{cc}
    \includegraphics[width=5.5cm]{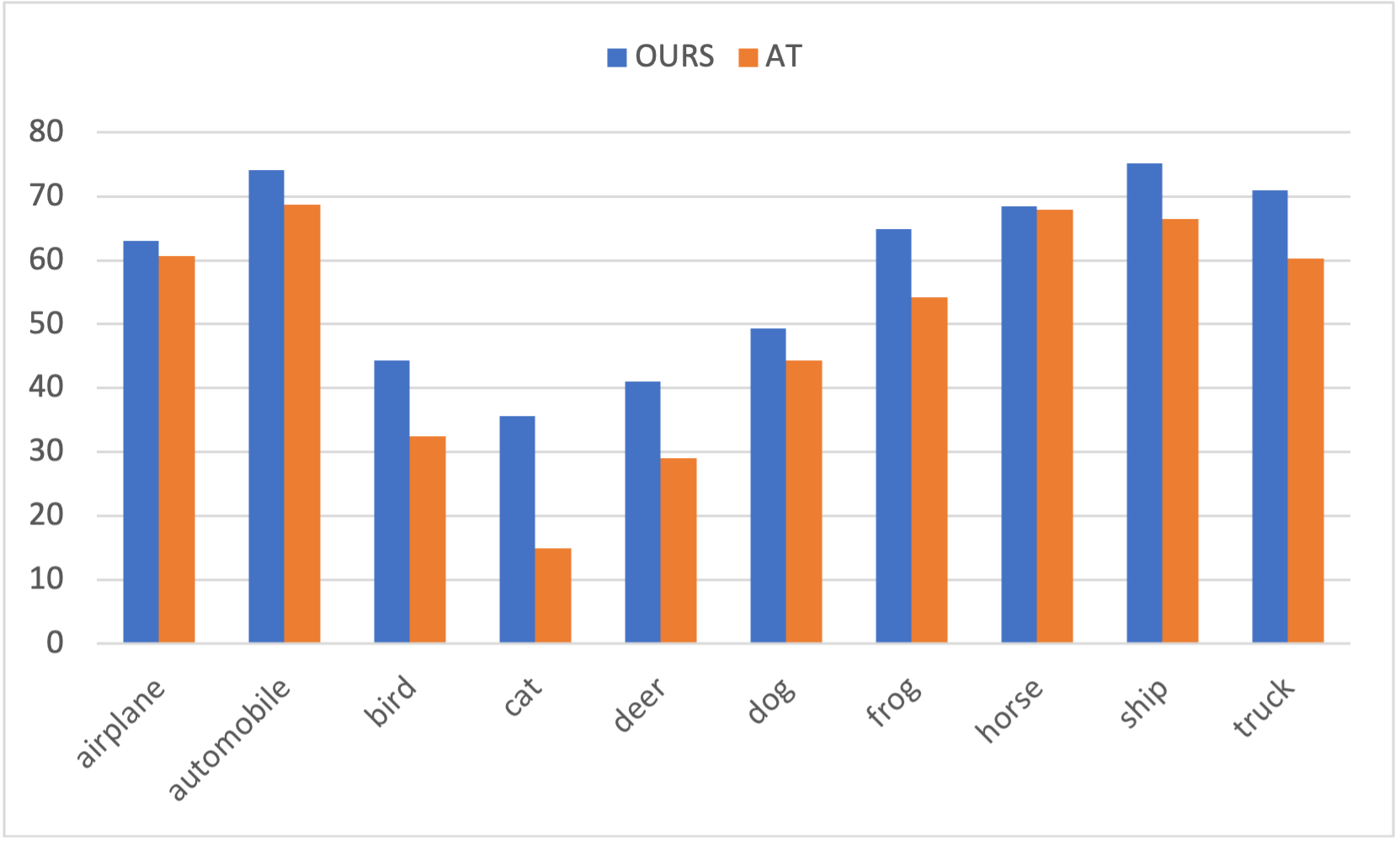}
    &\includegraphics[width=5.5cm]{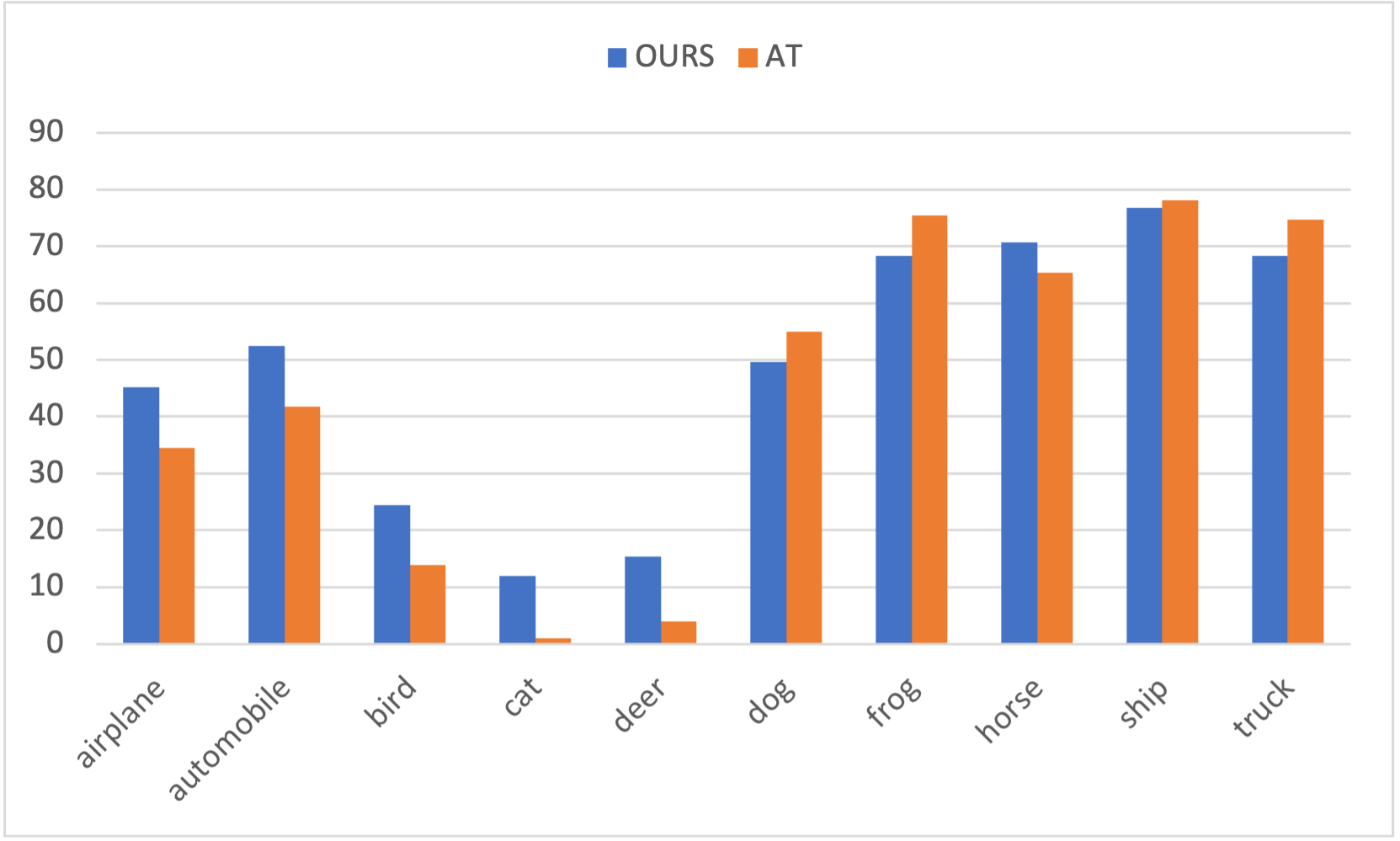}
    \end{tabular}
    \caption{Per-class robust accuracy comparisons between our method and traditional AT method on balanced and imbalanced (0.2 imbalance ratio) CIFAR10.}
    \label{fig:dist}
\end{figure}

\begin{table}[t]
\scriptsize
\centering
\caption{Performance evaluations on balanced and imbalanced (0.2 imbalance ratio) CIFAR10.}
\begin{tabular}{c|cccc|cccc}
\toprule
  \multicolumn{1}{c|}{\multirow{2}{*}{Method}}   & \multicolumn{4}{c|}{Balanced CIFAR10} & \multicolumn{4}{c}{Unbalanced CIFAR10}\\
  \cmidrule{2-9}
     & SA & RA-PGD & RA-Tail-30 & RA-AA & SA & RA-PGD & RA-Tail-30 & RA-AA\\
     \midrule
     AT & 82.1 & 49.29 & 28.35 & 45.22 & 69.74 & 42.37 & 6.25 & 39.55 \\
     FAT & \textbf{86.21} & 46.59 & 27.12 & 43.71 & - & - & - & -\\
     WMMR & 81.6 & 49.53 & 31.24 & 40.9 & - & - & - & -\\
     MAIL & 83.47 & 55.12 & 37.30 & 44.08 & 72.01 & 45.64 & 9.8 & 37.17 \\
     GAIRAT & 83.22 & 54.81 & 37.45 & 41.10 & 73.87 & 45.18 & 16.9 & 35.43 \\
     \midrule
     DONE-GD & 82.91 & 57.65 & 40.13 & \textbf{45.58} & 74.22 & \textbf{48.29} & \textbf{17.19} & \textbf{40.06} \\
     DONE-PGD & 81.68  & \textbf{58.71}  & 40.27  &  44.41 & \textbf{74.58} & 48.13 & 15.83 & 38.69 \\
     DONE-ADAM & 82.25 & 58.51 & \textbf{40.36} & 44.20 & 74.56 & 48.15 & 17.10 & 39.46 \\ 
     \bottomrule
\end{tabular}
\label{tab:cif10}
\end{table} 

\begin{table}[ht]
\scriptsize

\centering
\caption{Performance evaluations on balanced and imbalanced (0.2 imbalance ratio) SVHN.}
\begin{tabular}{c|cccc|cccc}
\toprule
  \multicolumn{1}{c|}{\multirow{2}{*}{Method}}   & \multicolumn{4}{c|}{Balanced SVHN} & \multicolumn{4}{c}{Unbalanced SVHN (0.2)}\\
  \cmidrule{2-9}
     & SA & RA-PGD & RA-Tail-30 & RA-AA & SA & RA-PGD & RA-Tail-30 & RA-AA\\
     \midrule
     AT & \textbf{93.21} & 57.82 & 47.21 & 46.27 & 88.46 & 51.08 & 33.67 & 41.13 \\
     MAIL & 93.11 & 65.56 & 52.23 & 41.38 & 86.62 & 48.48 & 31.91 & 34.46\\
     GAIRAT & 91.56 & 64.74 & 52.15 & 39.41 & 86.73 & 53.79 & 36.46 & 33.25\\
     \midrule
     DONE-PGD & 92.80 & \textbf{66.20} & \textbf{55.84} & 48.32  & 88.05 & 54.85 & 39.91 & 41.44\\
     DONE-ADAM & 92.58 & 65.72 & 53.79 & \textbf{49.13} & \textbf{88.98} & \textbf{55.90} & \textbf{41.10} & \textbf{42.38}\\ 
     \bottomrule
\end{tabular}
\label{tab:svhn}
\end{table}

\begin{table}[ht]
\scriptsize

\centering
\caption{Performance evaluations on STL10 and GTSRB (originally imbalanced) datasets.}
\begin{tabular}{c|cccc|cccc}
\toprule
  \multicolumn{1}{c|}{\multirow{2}{*}{Method}}   & \multicolumn{4}{c|}{STL10} & \multicolumn{4}{c}{GTSRB}\\
  \cmidrule{2-9}
     & SA & RA-PGD & RA-Tail-30 & RA-AA & SA & RA-PGD & RA-Tail-30 & RA-AA\\
     \midrule
     AT & 67.11 & 36.28 & 10.07 & 32.58 & 88.13 & 59.65 & 27.03 & \textbf{57.83} \\
     MAIL & \textbf{68.06} & 38.20 & 13.33 & 32.86  & 88.47 & 55.96 & 20.73 & 53.44 \\
     GAIRAT & 65.67 & 35.23 & 15.21 & 30.42  & 86.67 & 54.38 & 22.10 & 51.18 \\
     \midrule
     DONE-PGD & 66.98 & \textbf{40.23} & \textbf{17.87} & 33.71 & \textbf{89.34} & \textbf{60.16} & 27.41 & 57.25\\ 
     DONE-ADAM & 66.92 & 39.70 & 17.62 & \textbf{34.59} & 88.76 & 60.05 & \textbf{28.35} & 57.70 \\ 
     \bottomrule
\end{tabular}
\label{tab:other}
\end{table}

\begin{wraptable}{R}{7.4cm}
\centering
\footnotesize
\caption{Comparisons with fast AT methods.} 
\begin{tabular}{c|c|c|c}
\toprule Method & \text { SA } & \text { RA-PGD } & \text { RA-Tail-30 } \\
 \midrule \text { Fast-AT } & \textbf{82.44} & 45.37 & 23.3 \\
 \text { Fast-AT-GA } & 79.83 & 47.56 & 25.01 \\
 \text { Fast-BAT } & 79.91 & 49.13 & 26.05 \\
\midrule \text { DONE } & 79.17 & \textbf{55.17} & \textbf{37.13}  \\
\bottomrule
\end{tabular} 
\label{tab:fast} 
\end{wraptable}

\subsection{Main Results}
\textbf{Comparisons under CIFAR10.} 
The overall performance of the compared baselines under both balanced and imbalanced CIFAR10 are reported in \Cref{tab:cif10}. We highlight the following important observations. {\bf First,} overall our methods outperform all other baselines in terms of all three robustness metrics (RA-PGD, RA-Tail-30, and RA-AA), meanwhile also maintaining a competitive standard acurracy (SA). In particular, our algorithms can improve the RA-PGD of the strongest baseline (MAIL) by over 3\% with most of the gain coming from improvement on the weakest classes, as is depicted on the RA-Tail-30 column. This shows that our doubly robust approach can mitigate the weak robustness on the vulnerable data points while also keeping the robust performance on well guarded examples (i.e., easy data points) at the same level. 
{\bf Second}, note that the instance reweighted baselines consistently outperform the methods without reweighting on the RA-Tail-30 metric, which indicates that reweighting in general boosts the robustness on weak examples. This advantage is even clearer on the imbalanced data case. Yet, our algorithms still outperform the other instance reweighted methods by around 3\% in terms of RA-Tail-30 in the balanced data setup due to their doubly-robust nature, which clearly is helpful both for average and worst-case robust performance. {\bf Third}, note that the other methods that employ heuristics to compute the instance weights achieve worst RA-AA performance compared to the standard AT method. In contrast, our algorithms, which also fall in the instance reweighted paradigm, can still attain competitive performance for RA-AA compared to the standard AT method. This highlights the suboptimality of using heuristics which could be geared towards improving one metric (such as the RA-PGD) but may not be necessarily beneficial to the overall robustness of the model.

\noindent \textbf{Performance Comparisons on the other datasets.} \Cref{tab:svhn} shows the evaluations of the compared baselines on the SVHN dataset. As depicted, our algorithms (DONE-PGD and DONE-ADAM) significantly outperform the standard AT method on the RA-PGD metric and at the same time achieve better robustness against AutoAttacks (RA-AA). Compared with the instance reweighted baselines (MAIL \& GAIRAT), the advantage of our methods is even more important on the RA-AA metric (e.g., up to around $+8\%$ on RA-AA vs $+1.5\%$ on RA-PGD for the balanced data setting). We also note considerable improvements on the GTSRB and STL10 datasets in \Cref{tab:other}. 
Similarly to the CIFAR10 dataset, our approach yields an important boost on the RA-Tail-30 robustness metric compared to all other baselines and the advantage is more significant on the imbalanced data case. These results consistently demonstrate that our doubly-robust approach can indeed improve worst-case robust performance meanwhile also maintaining/improving the overall robustness. 

\noindent \textbf{Evaluations under Fast AT Setting.} We also compare our approach with fast adversarial training methods. For this setup, we generate the adversarial attacks during training with only 1 GD step after initialization with 1 PGD warm-up step \cite{zhang2022revisiting} and train all baselines for 25 epochs. We compare our method with Fast-BAT \cite{zhang2022revisiting}, Fast-AT \cite{wong2020fast}, and Fast-AT-GA \cite{andriushchenko2020understanding} on CIFAR10. The evaluations of the compared methods are reported in \Cref{tab:fast}. Our algorithm achieves a much better robust performance and at the same time keeps a competitive SA. In particular, we note a significant boost (+11\%) in RA-Tail-30, which is mainly the cause of the improvement in the overall RA-PGD.









\section{Conclusions}
In this paper, we proposed a novel doubly robust instance reweighted adversarial training framework based on DRO and bilevel optimization, which not only determines the instance weights for AT in a theoretically grounded way but also addresses the non-uniform issues of traditional AT by boosting the robust performance of the most vulnerable examples. To address the technical challenges in solving the doubly robust optimization problem, we proposed a penalized reformulation using the log-barrier penalty method, and developed a novel algorithm based on implicit function theorem and tracking a running average of the outer level function values. Our proposed framework also leads to a new finite-sum compositional bilevel optimization problem, which can be of great interest to the optimization community and  solved by our developed algorithm with theoretical guarantee. In the experiments on standard benchmarks, our doubly-robust approach (DONE) outperforms related state-of-the-art baseline approaches in average robust performance and also improves the robustness against attacks on the weakest data points.

\bibliography{iclr2021_conference}
\bibliographystyle{iclr2021_conference}

\newpage
\appendix 

\textbf{\Large Supplementary Material}
\vspace{1cm}

\noindent We provide the details omitted in the main paper. The sections are organized as fellows: 

\noindent $\bullet$ \Cref{app:exp}: We provide more details about datasets, training setups, hyperparameters search, and implementations. 

\noindent $\bullet$ \Cref{app:weights}: We provide the distributions of the robustly learned instance weights and models' confusion matrices for the considered datasets. 

\noindent $\bullet$ \Cref{app:implicit}: We provide the proof of \Cref{prop:igrad}. 

\noindent $\bullet$ \Cref{app:theory}: We present the convergence analysis of our proposed CID algorithm including the statements of assumptions and the full proof of \Cref{thr}.

\vspace{20pt}
\section{More Empirical Specifications} \label{app:exp} 
\vspace{10pt}
\subsection{More Details about Training and Hyperparameters Search}
Following the standard practice in adversarial training \cite{madry2017towards,liu2021probabilistic,zhang2020geometry}, we train our baselines using stochastic gradient descent with a minibtach size of 128 and a momentum of 0.9. We use ResNet-18 as the backbone network as in \cite{madry2017towards} and train our baselines for 60 epochs with a cyclic learning rate schedule where the maximum learning rate is set to 0.2 \cite{zhang2020geometry,liu2021towards} (please see \cref{fig:training}). We consider $\ell_\infty$-norm bounded adversarial perturbations with a maximum radius of $\epsilon=8/255$ both for training and testing. For the KL-divergence regularization parameter $r$ in our algorithms, we use a decayed schedule where we initially set it to 10 and decay it to 1 and 0.1, respectively at epochs 40 and 50 (see \cref{fig:training}). This setting allows our methods to start with an instance-weight distribution close to uniform at the beginning of training where the weights are less informative, and progressively emphasize more on learning a weight distribution that boosts worst-case adversarial robustness. All hyperparameters were fixed by holding out 10\% of the training data as a validation set and selecting the values that achieve the best performance on the validation set. For the reported results, we train on the full training dataset and report the performance on the testing set \cite{zhang2020geometry,liu2021probabilistic}. 

\vspace{10pt}
\subsection{Further Descriptions about Datasets} 
We consider image recognition problems and compare the performance of the baselines on four datasets: CIFAR10 \cite{krizhevsky2009learning}, SVHN \cite{netzer2011reading}, STL10 \cite{pmlr-v15-coates11a}, and GTSRB \cite{stallkamp2012man}. For CIFAR10, SVHN, and STL10 we use the training and test splits provided by Torchvision. For GTSRB, we use the splits provided in \cite{zhang2022revisiting}. 
STL10 has 10 categories that are similar to those in CIFAR10 but with larger colour images ($96 \times 96$ resolution) and less samples (500 per class for training and 800 per class for testing). 
The German Traffic Sign Recognition Benchmark (GTSRB) contains 43 classes of traffic signs, split into 39,209 training images and 12,630 test images. The images are $32 \times 32$ resolution colour. 
The dataset is highly class-imbalanced with some classes having over 2000 samples and others only 200 samples. 

\vspace{10pt}





\begin{figure}[hbt!]
    \centering
    \begin{tabular}{cc} 
    \includegraphics[width=0.49\textwidth,height=0.3\textwidth]{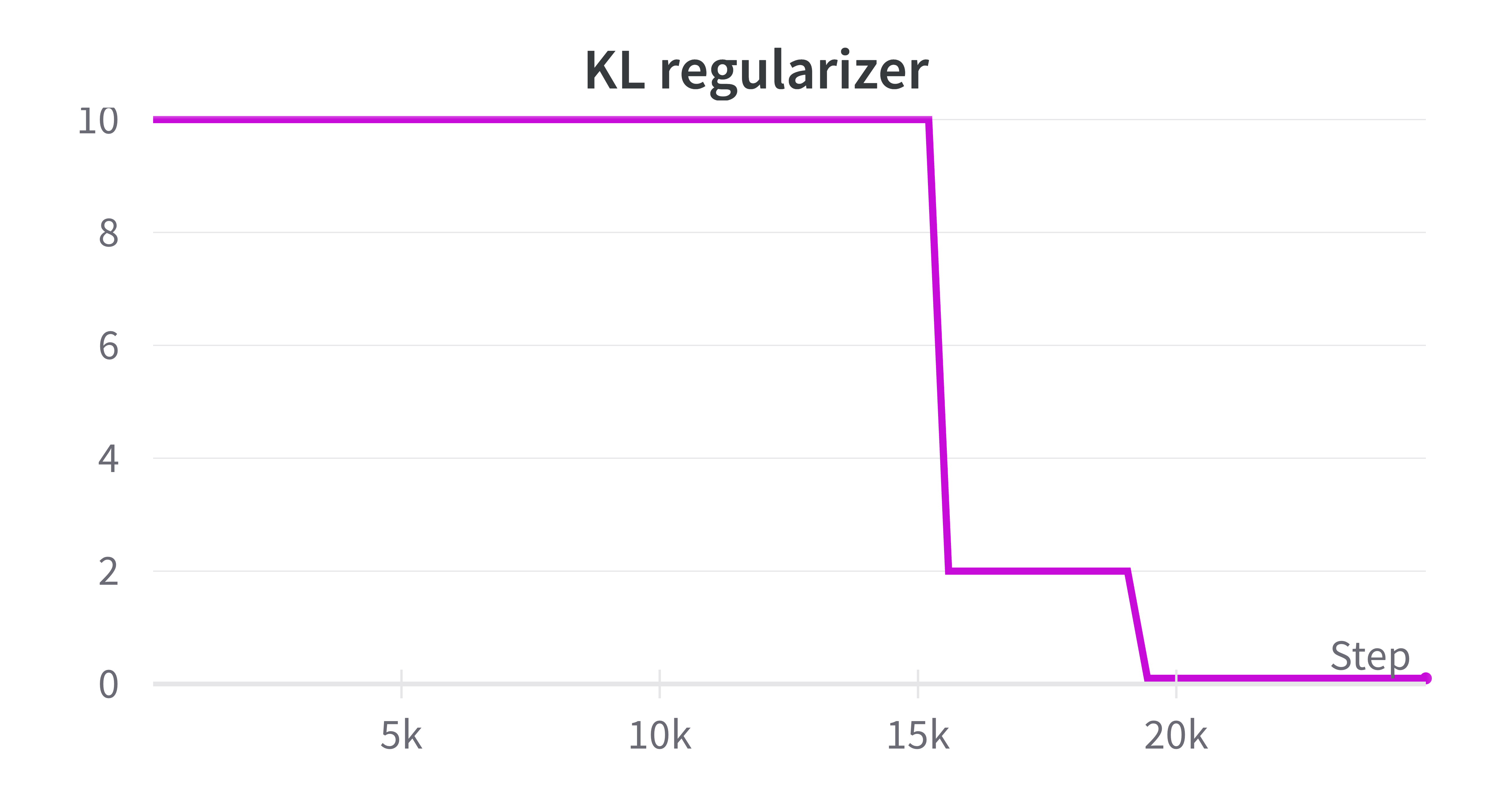} & 
    \includegraphics[width=0.49\textwidth,height=0.3\textwidth]{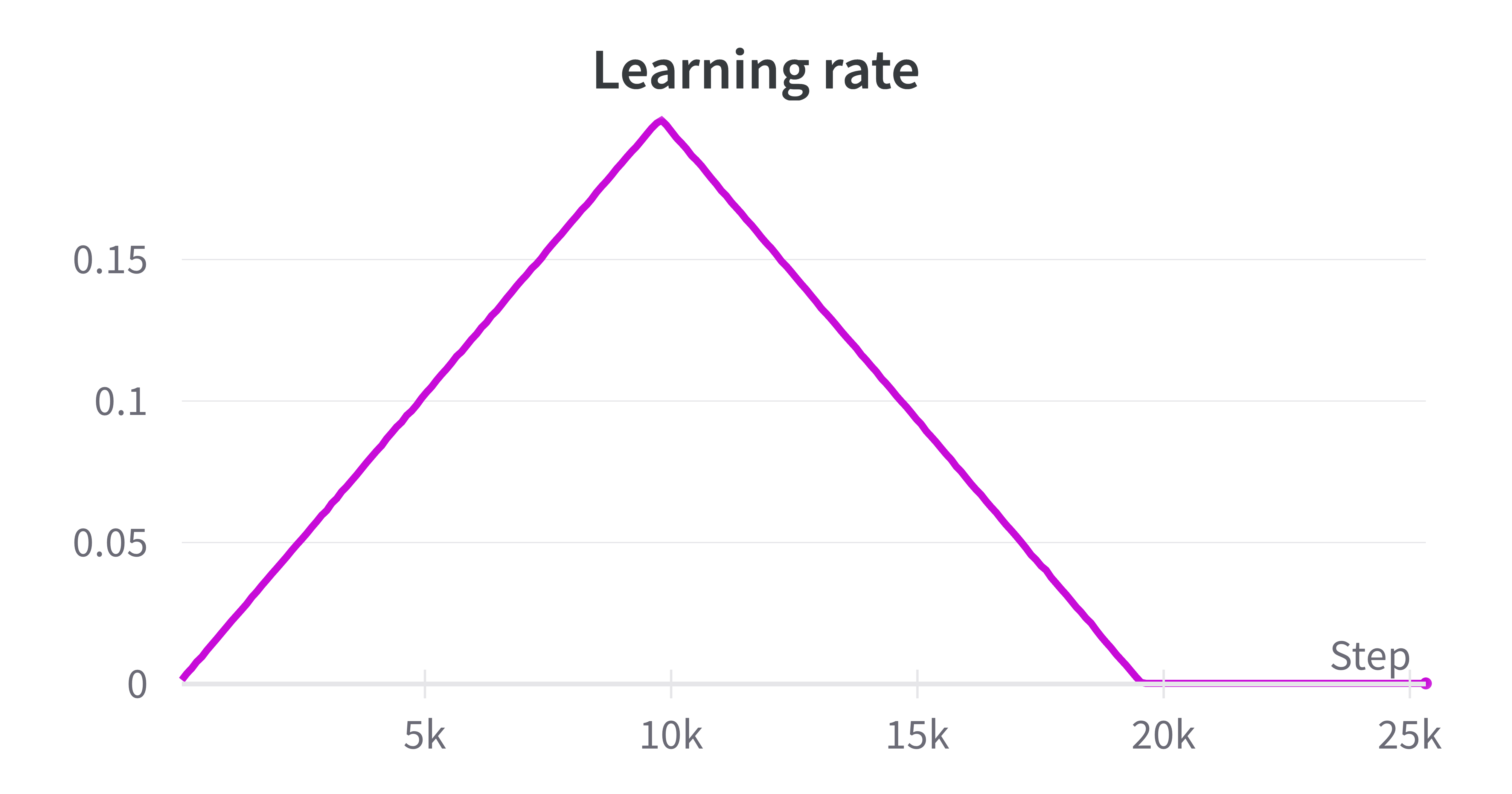} \\ 
    \includegraphics[width=0.49\textwidth,height=0.3\textwidth]{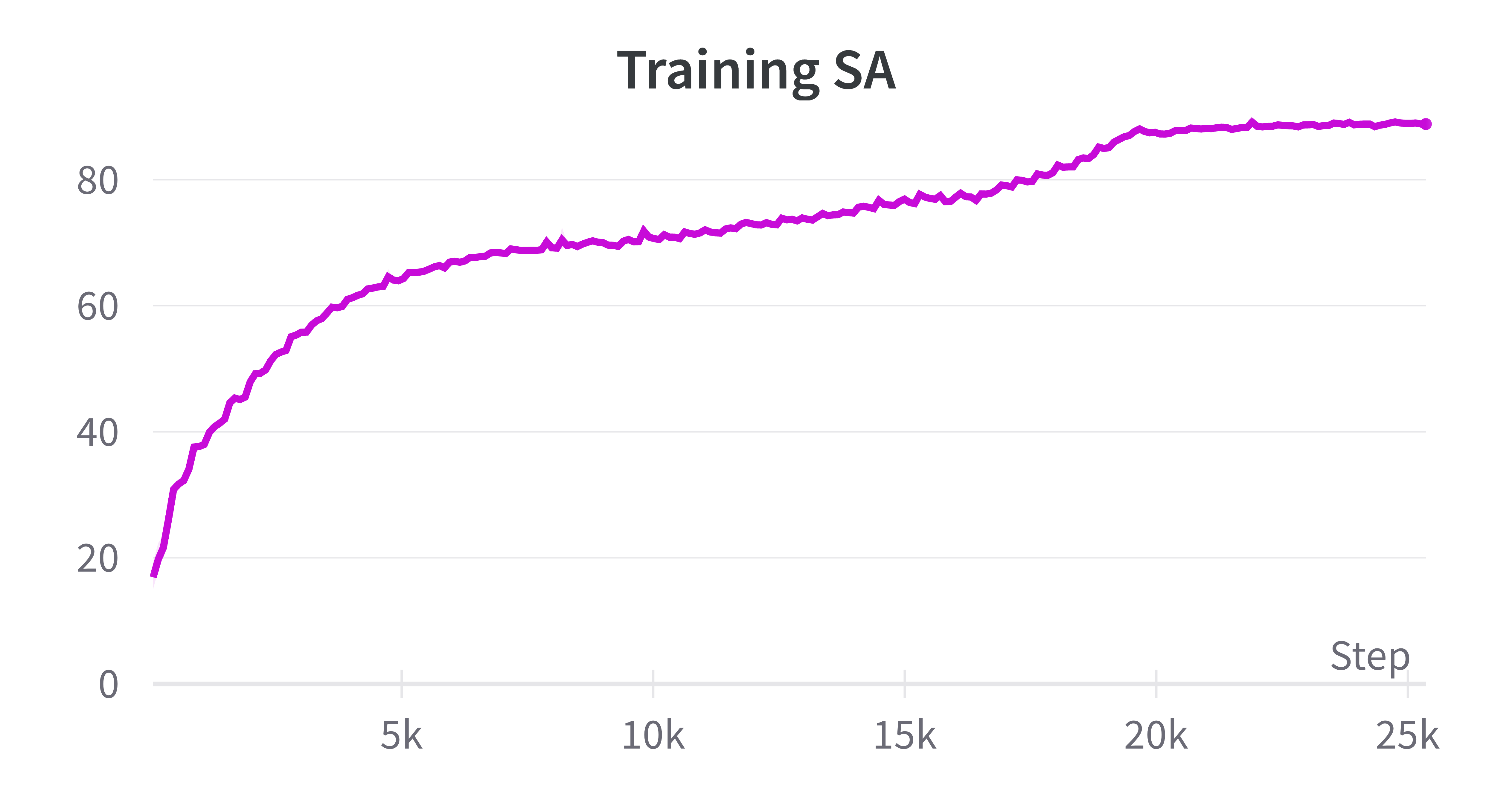} & 
    \includegraphics[width=0.49\textwidth,height=0.3\textwidth]{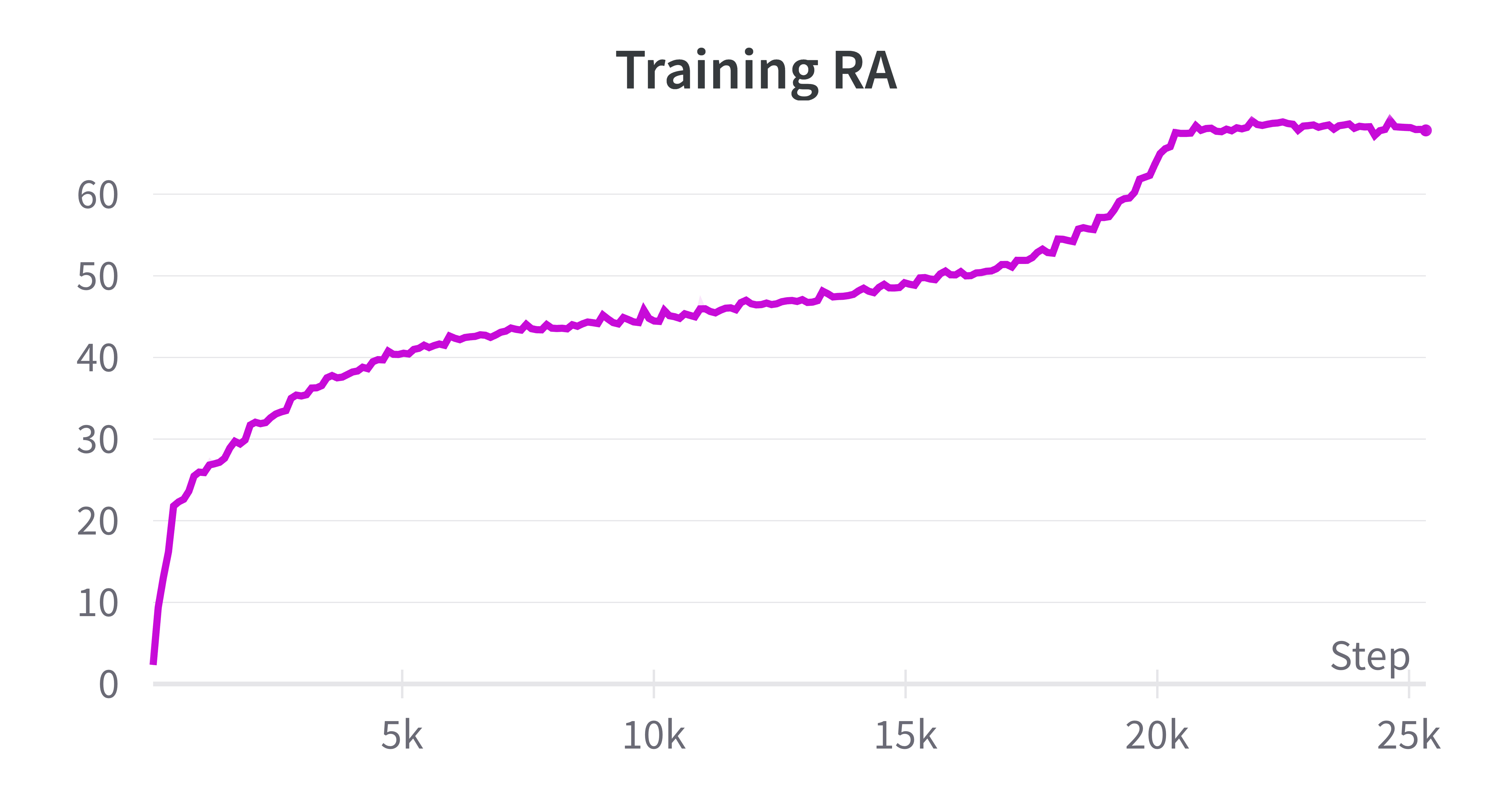} \\ 
    \includegraphics[width=0.49\textwidth,height=0.3\textwidth]{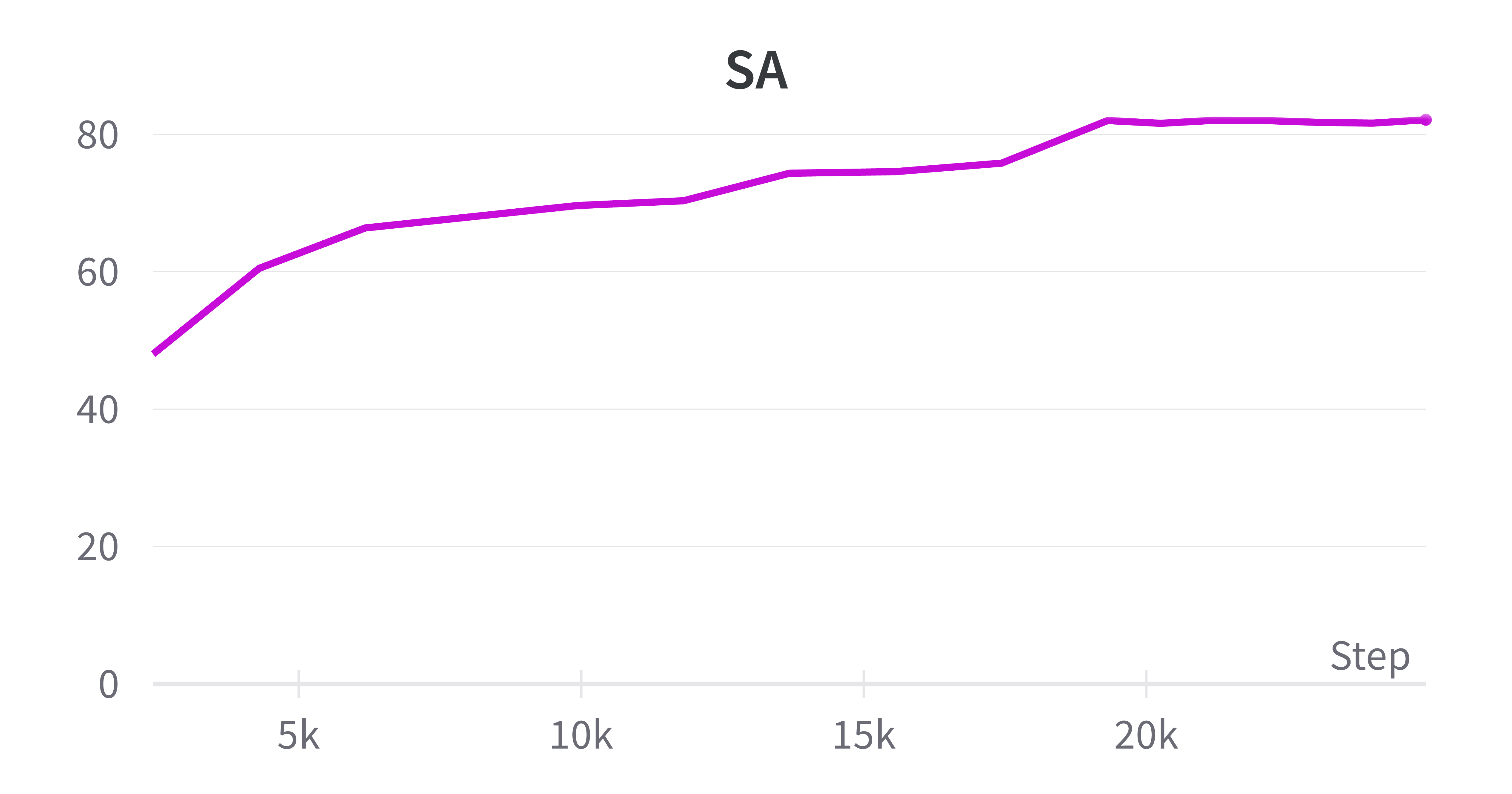} & 
    \includegraphics[width=0.49\textwidth,height=0.3\textwidth]{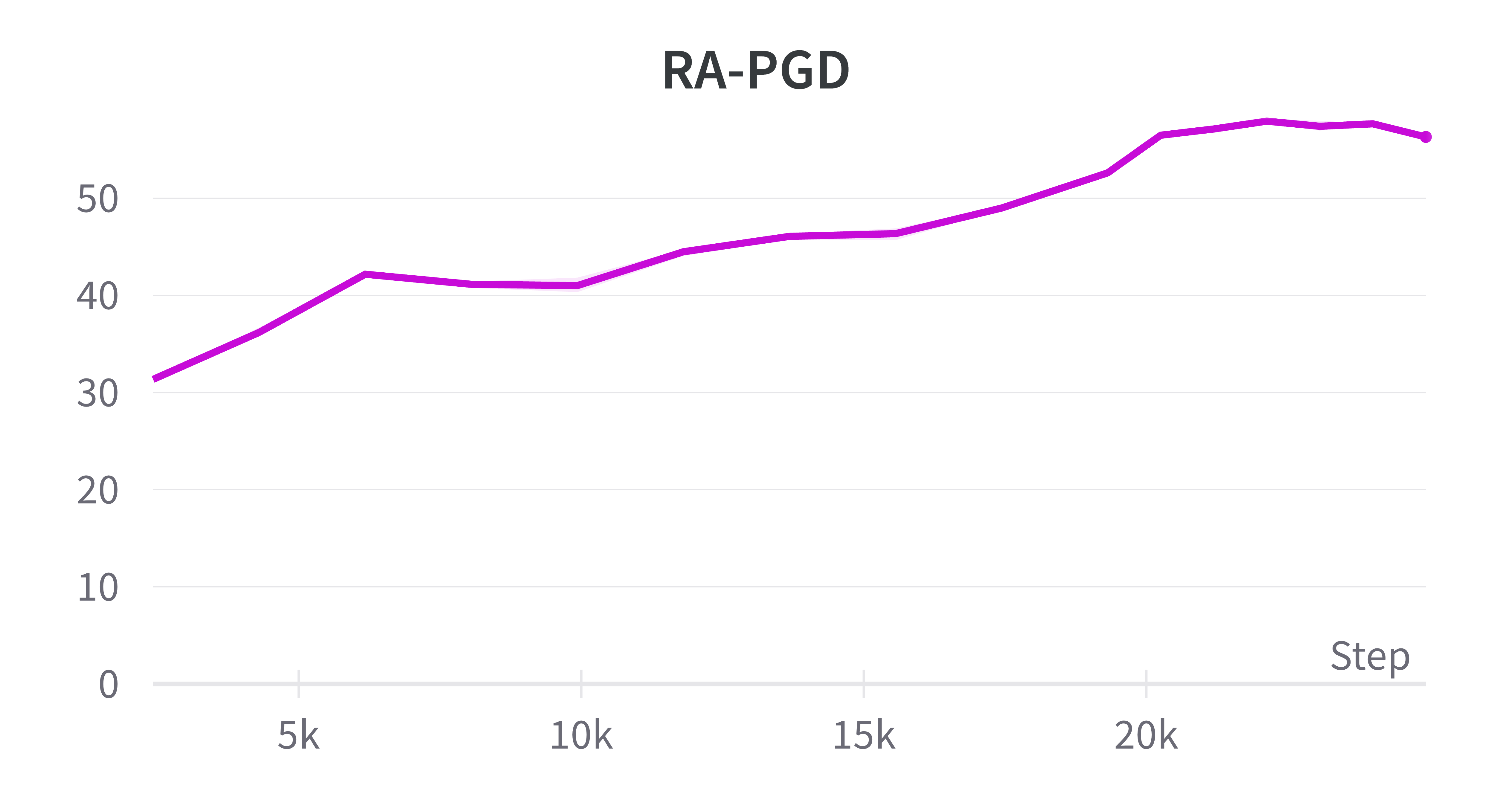} 
    \end{tabular}
    \caption{Learning process of our method DONE-ADAM for the balanced CIFAR10 experiment. The \textbf{SA} and \textbf{RA-PGD} in third row are evaluated on the test set. 
    The plots are obtained by averaging three different runs. } 
    \label{fig:training}
\end{figure}

\begin{figure}[hbt!]
    \centering
    \begin{tabular}{cc} 
    \includegraphics[width=0.5\textwidth]{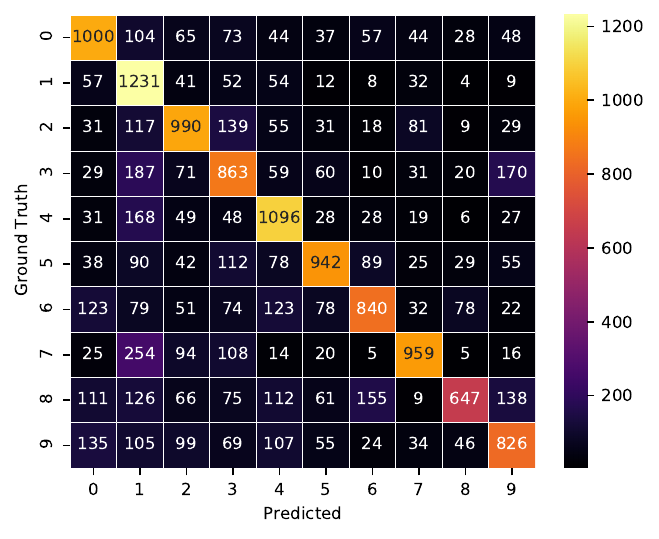} & 
    \includegraphics[width=0.5\textwidth]{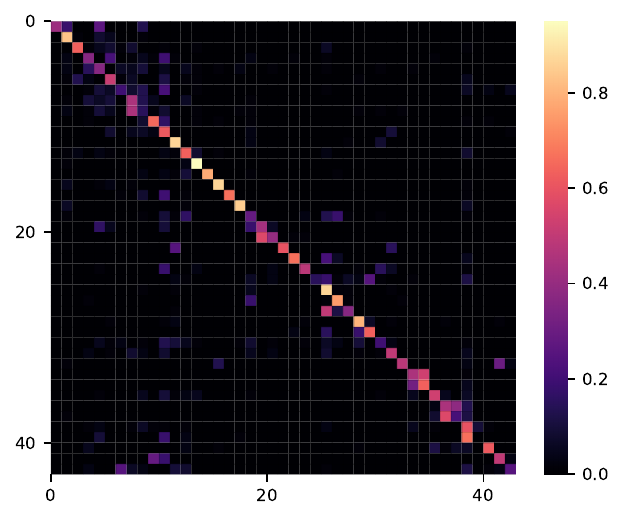} \\ 
    SVHN (divide by 1500 to obtain accuracies) & GTSRB \\ 
    \includegraphics[width=0.50\textwidth]{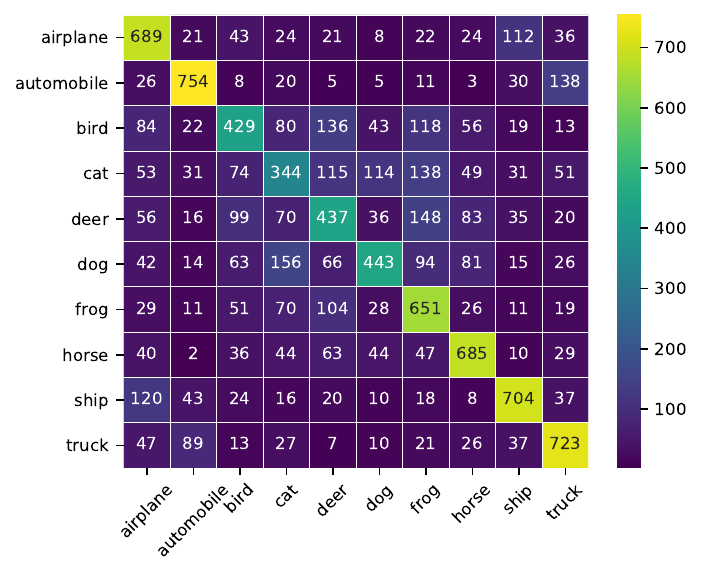} & 
    \includegraphics[width=0.5\textwidth]{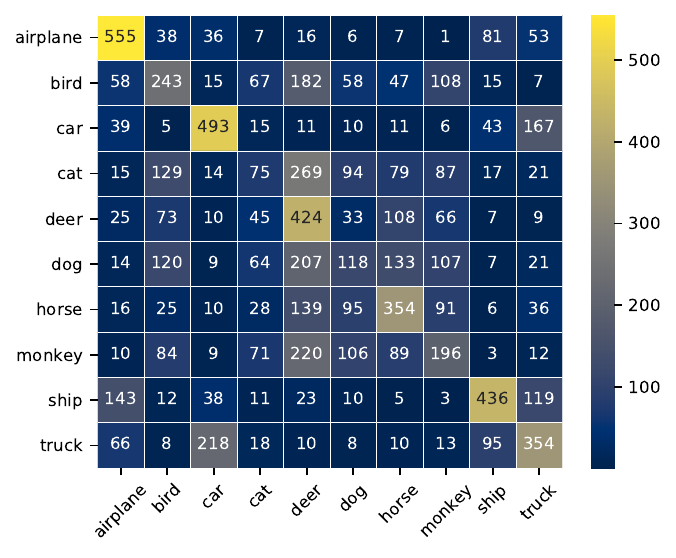} \\ 
    CIFAR10 (divide by 1000 to obtain accuracies) & STL10 (divide by 800 to obtain accuracies)
    \end{tabular}
    \caption{Confusion matrices of models robustly trained using our approach. The annotations correspond to the raw number of adversarial examples from class i that were classified as class j. Per-class robust performance are depicted in the diagonals. Axis labels are provided in first plot only.} 
    \label{fig:confmat}
\end{figure} 

\newpage
\section{Distributions of Learned Weights} \label{app:weights}
\Cref{fig:weight} shows the distributions of the learned weights per-class for CIFAR10, SVHN, and STL10 datasets. The distributions are obtained on the testing sets using 20 PGD steps. 
Further per-class insights are also provided in \Cref{fig:confmat} as the confusion matrices (where per-class robust accuracies are depicted in the diagonals). Comparing the two figures, we note a {\bf negative correlation} between the magnitude of weights and the per-class robust performance, i.e., classes on which the model achieve high robustness are usually associated with weights that are closer to 0. For example, the class \textit{automobile} in CIFAR10 datset, in which the model achieves the highest adversarial robustness of 74.5\% also has around 70\% of its associated weights less than 0.001. As a comparison, the most vulnerable class (i.e., \textit{cat}, in which the model achieves a robustness of 34.4\%) has more than 90\% of its associated weights larger than 0.001. We note a similar correlation of the weights distributions and the robust performance in STL10 dataset. Interestingly, the robust performance is more uniformly distributed across classes in the SVHN dataset (as depicted in the corresponding confusion matrix in \Cref{fig:confmat}) and our method was able to automatically discover very close weights distributions across classes for this dataset. This further demonstrates the generality/robustness of our approach, which can perform well no matter if instance re-weighting is advantageous or less important. 

\Cref{fig:easy,fig:hard} provides examples of images from CIFAR10 and STL10 datasets with low/high associated weights. Examples with low weights are usually `easy' images in which the objects are well centered with clear/non-ambiguous backgrounds. Our algorithm was able to correctly classify the adversarial examples crafted from these images. In contrast, examples with high weights are generally `hard' samples with only parts of the objects appearing or/and backgrounds that can lead to ambiguity. For example, the true label of the second image in the first row of \Cref{fig:hard} is \textit{deer} but the image also contains a car in its background, which may easily lead to confusion. Also note the first image in the third row of \Cref{fig:hard}, where only part of the tires of the car appears in the image. 

\vspace{20pt}
\begin{figure}[H]
    \centering
    \begin{tabular}{cc}
    \includegraphics[height=6.5cm]{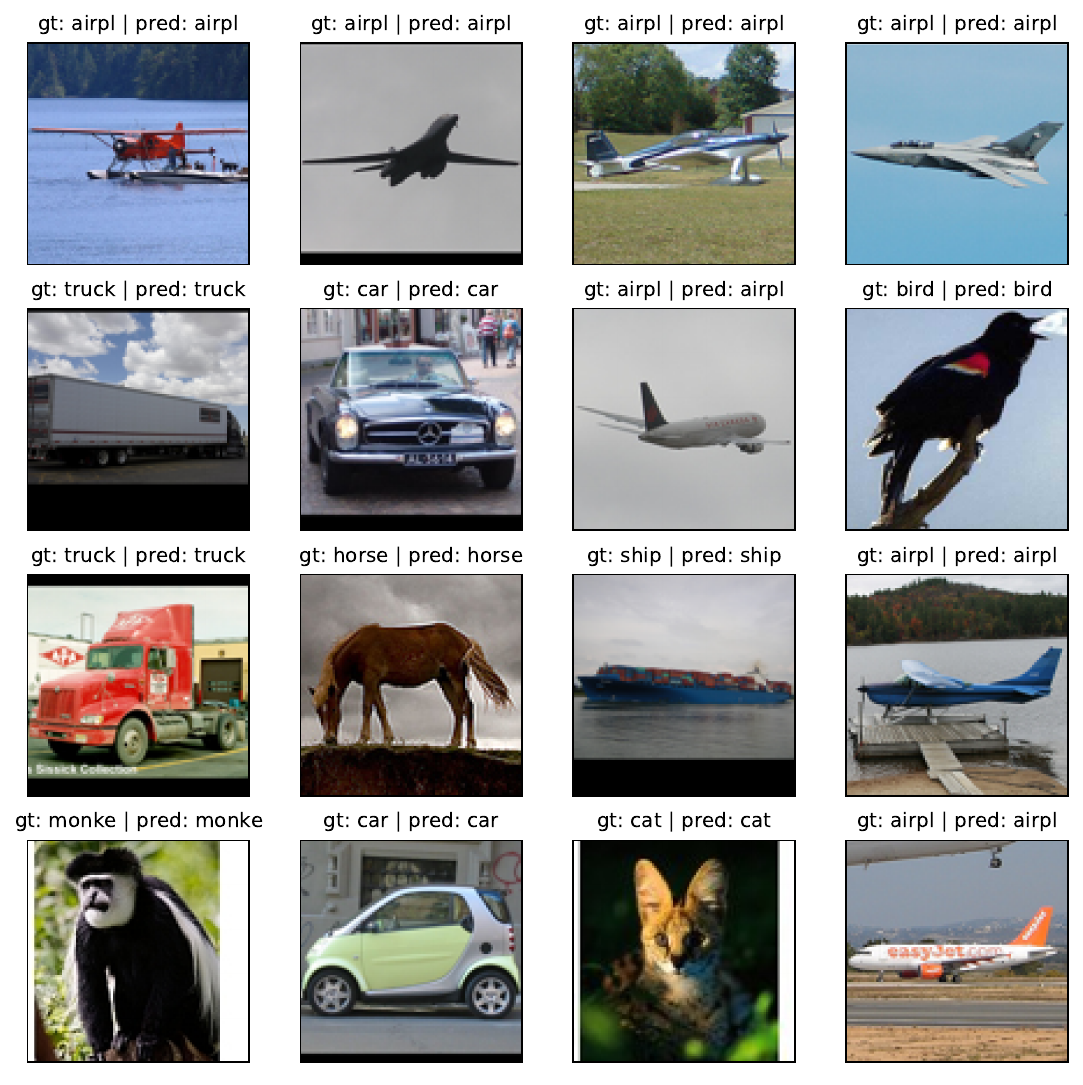}
    &\includegraphics[height=6.5cm]{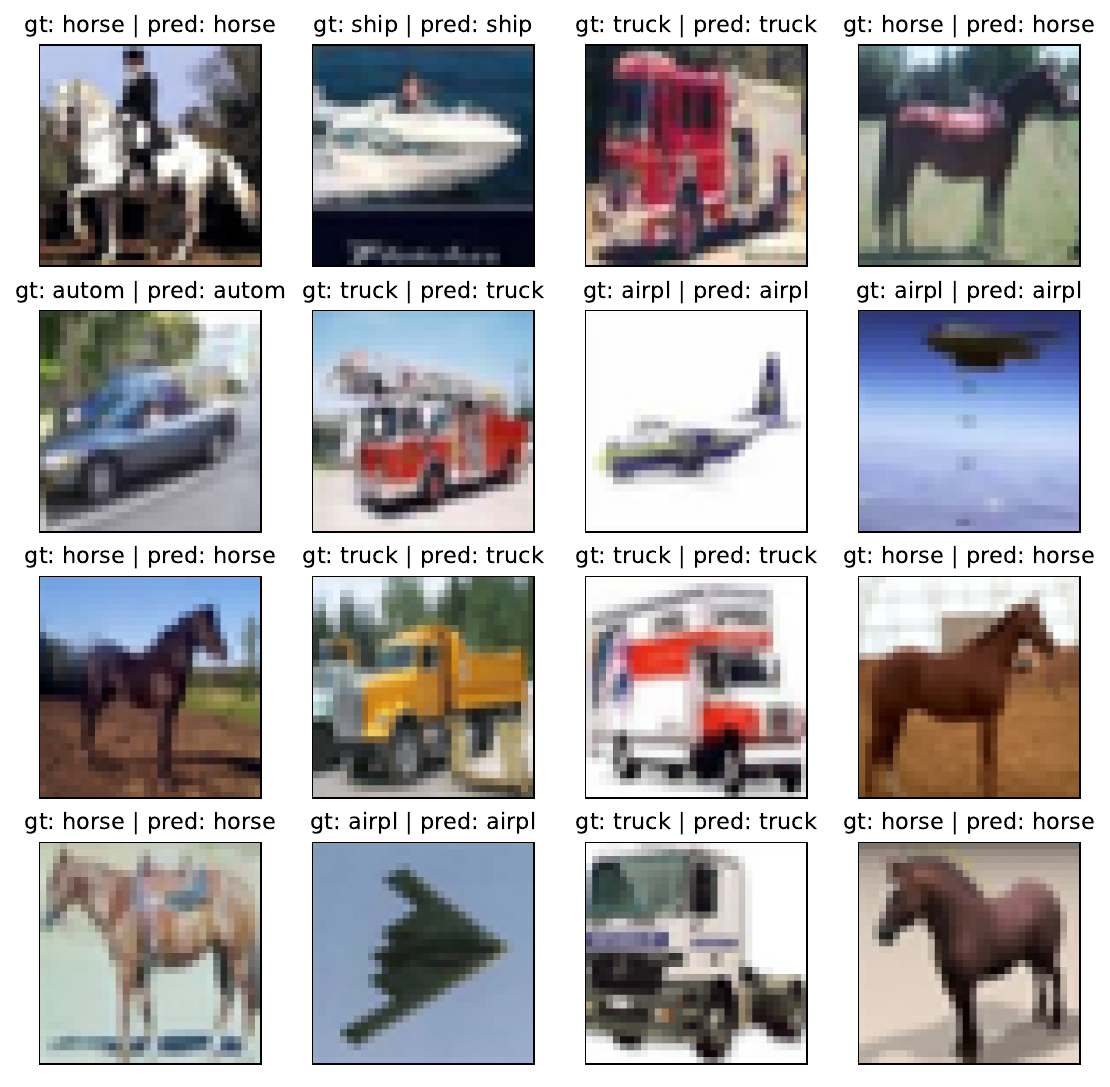} \\ 
    (a) STL10 dataset & (b) CIFAR10 dataset 
    \end{tabular}
    \caption{Samples with small weights from STL10 and CIFAR10 datasets. These are generally `easy' images with the true objects well centered and clear/non-ambiguous backgrounds.}
    \label{fig:easy}
\end{figure} 

\begin{figure}[H]
    \centering
    \begin{tabular}{c} 
    \includegraphics[width=0.85\textwidth]{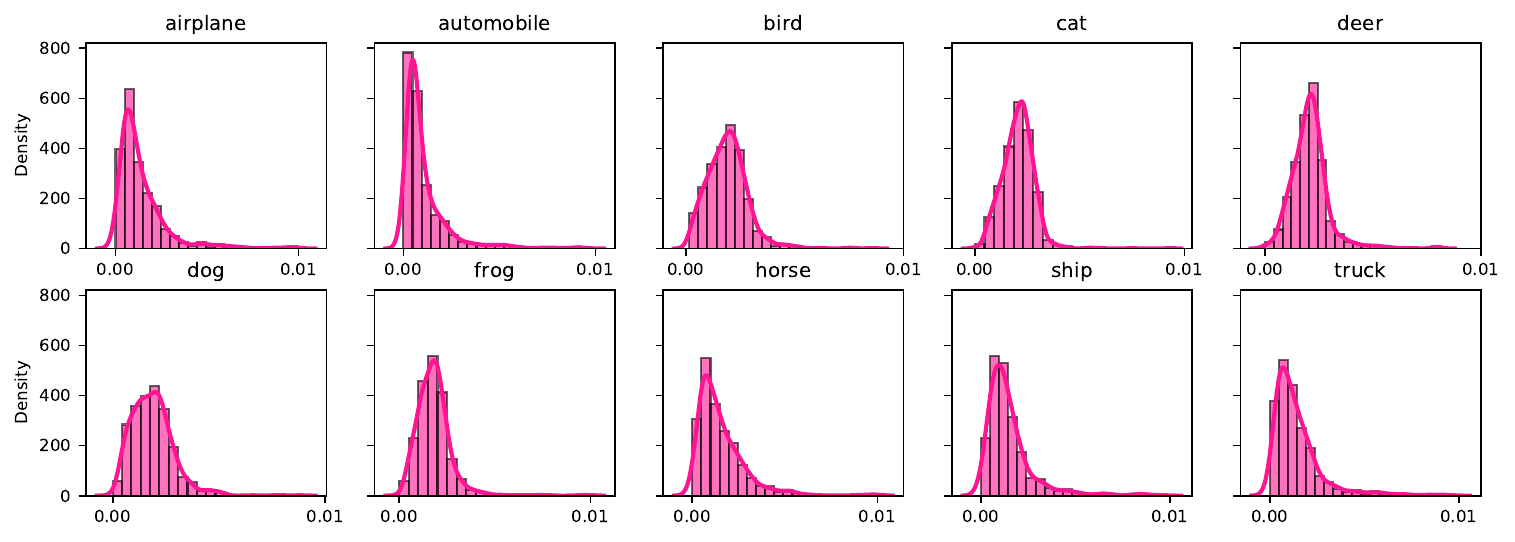}\\
    CIFAR10 dataset\\
    \includegraphics[width=0.85\textwidth]{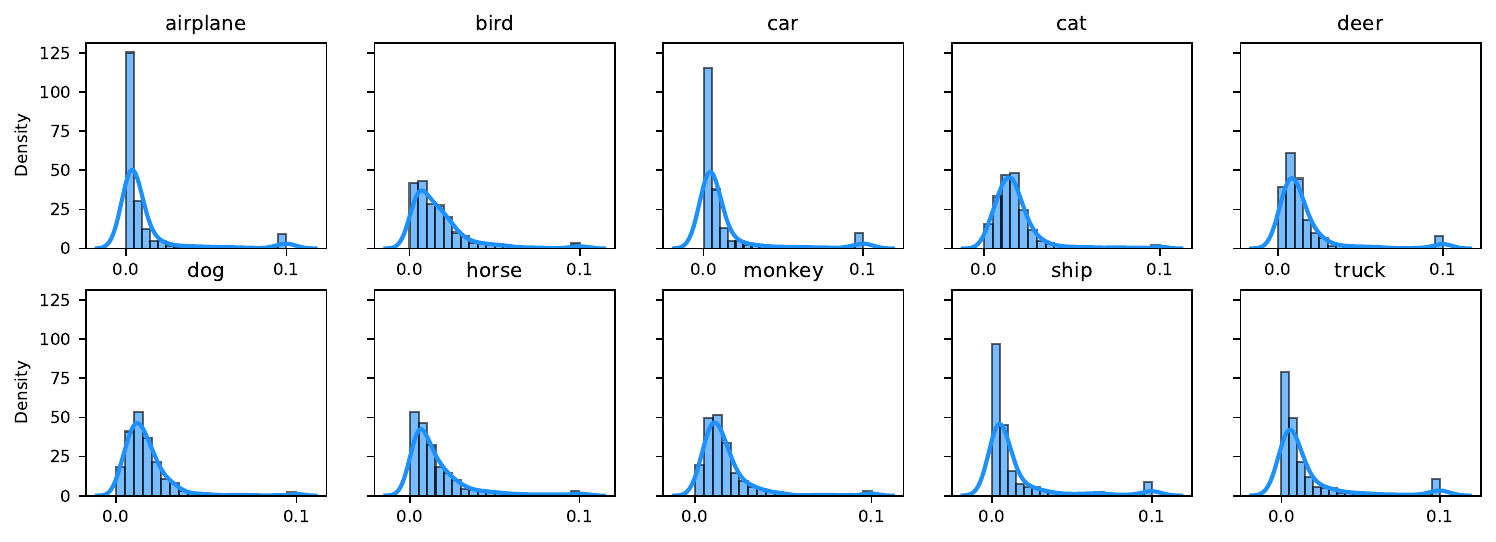}\\ 
    STL10 dataset\\
    \includegraphics[width=0.85\textwidth]{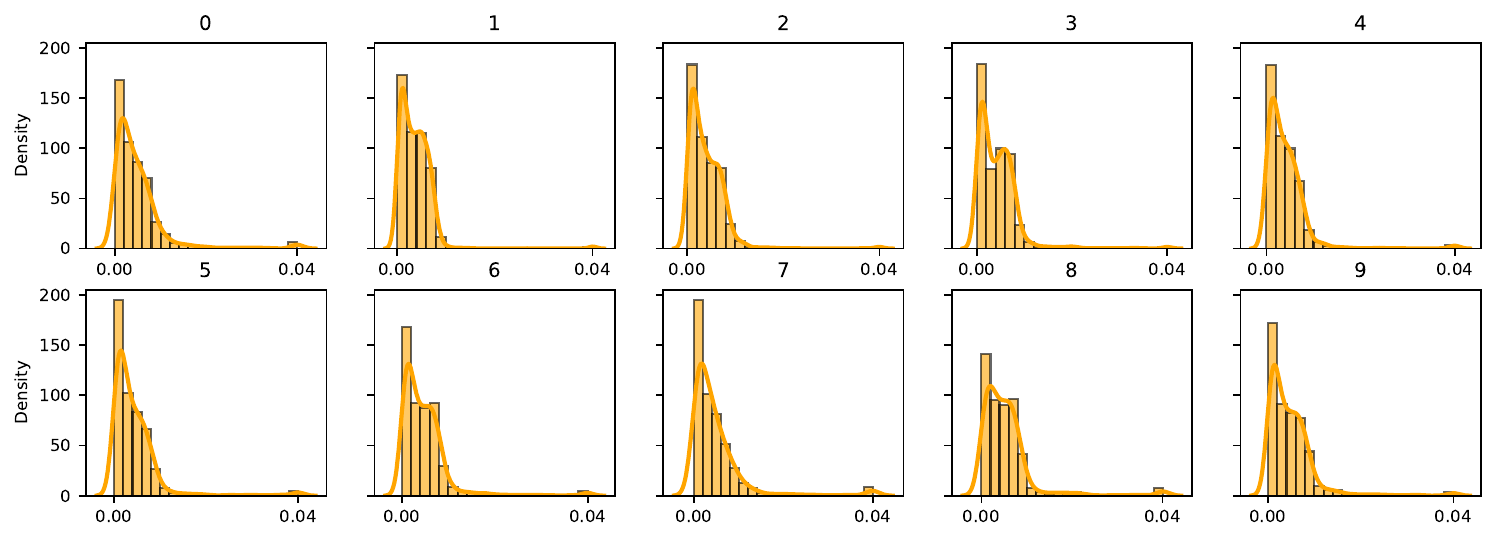} \\
    SVHN dataset\\
    \end{tabular}
    \caption{Distributions of the learned weights per class on the testing sets. Classes on which the model achieve high robustness are usually associated with weights that are closer to 0. For example, the class \textit{automobile} in CIFAR10 datset, in which the model achieves the highest adversarial robustness of 74.5\% also has around 70\% of its associated weights less than 0.001. As a comparison, the class \textit{cat} (in which the model achieves a robustness of 34.4\%) has more than 90\% of its associated weights larger than 0.001. We note a similar correlation of the weights distributions and the robust performance in STL10. The robust performance is better uniformly distributed across classes in the SVHN dataset (see \cref{fig:confmat}) and our method was able to obtain a similar weights distribution across classes for this dataset.}
    \label{fig:weight}
\end{figure}

\begin{figure}[hbt!]
    \centering
    \begin{tabular}{cc}
    \includegraphics[height=6.5cm]{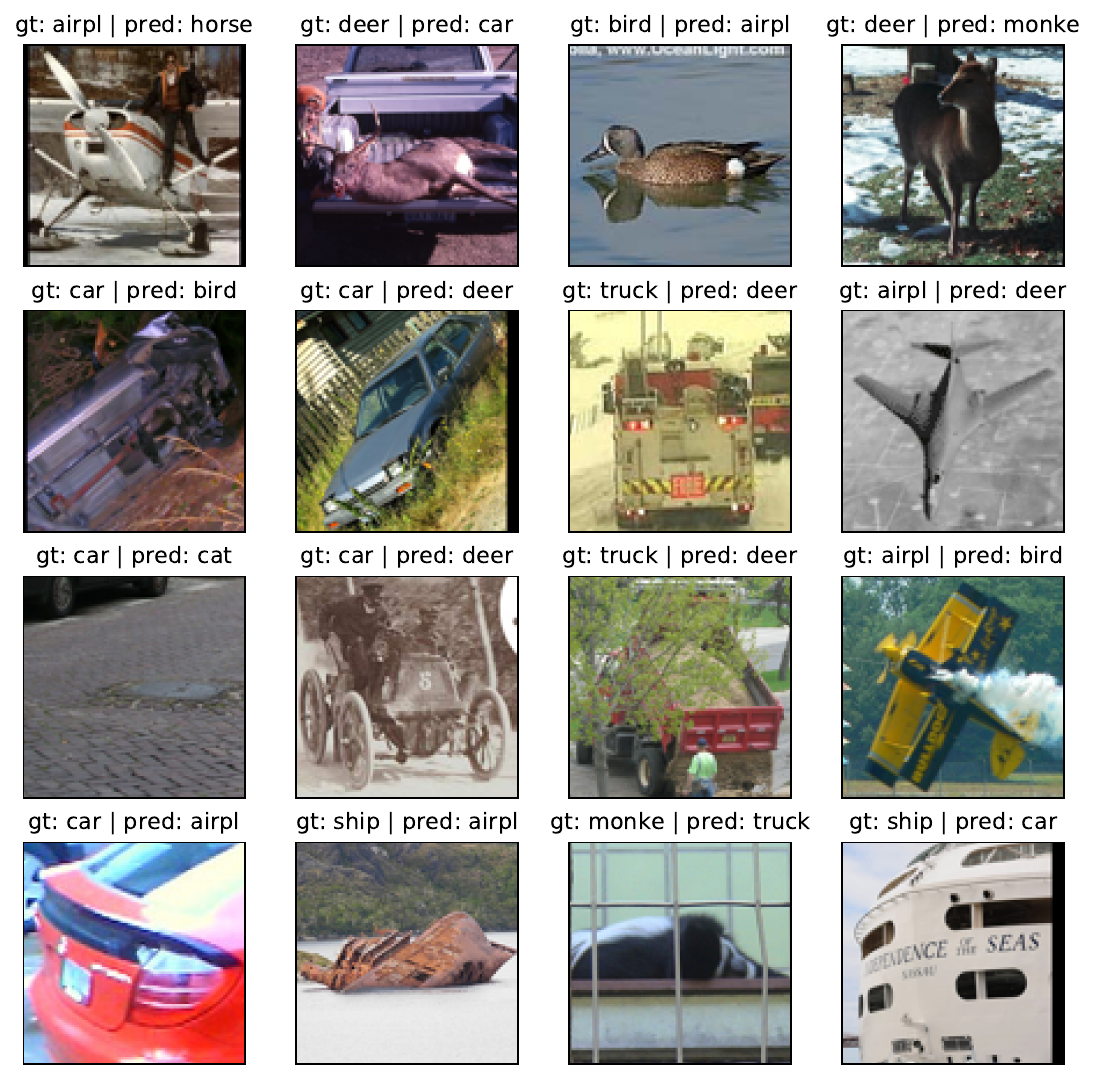}
    &\includegraphics[height=6.5cm]{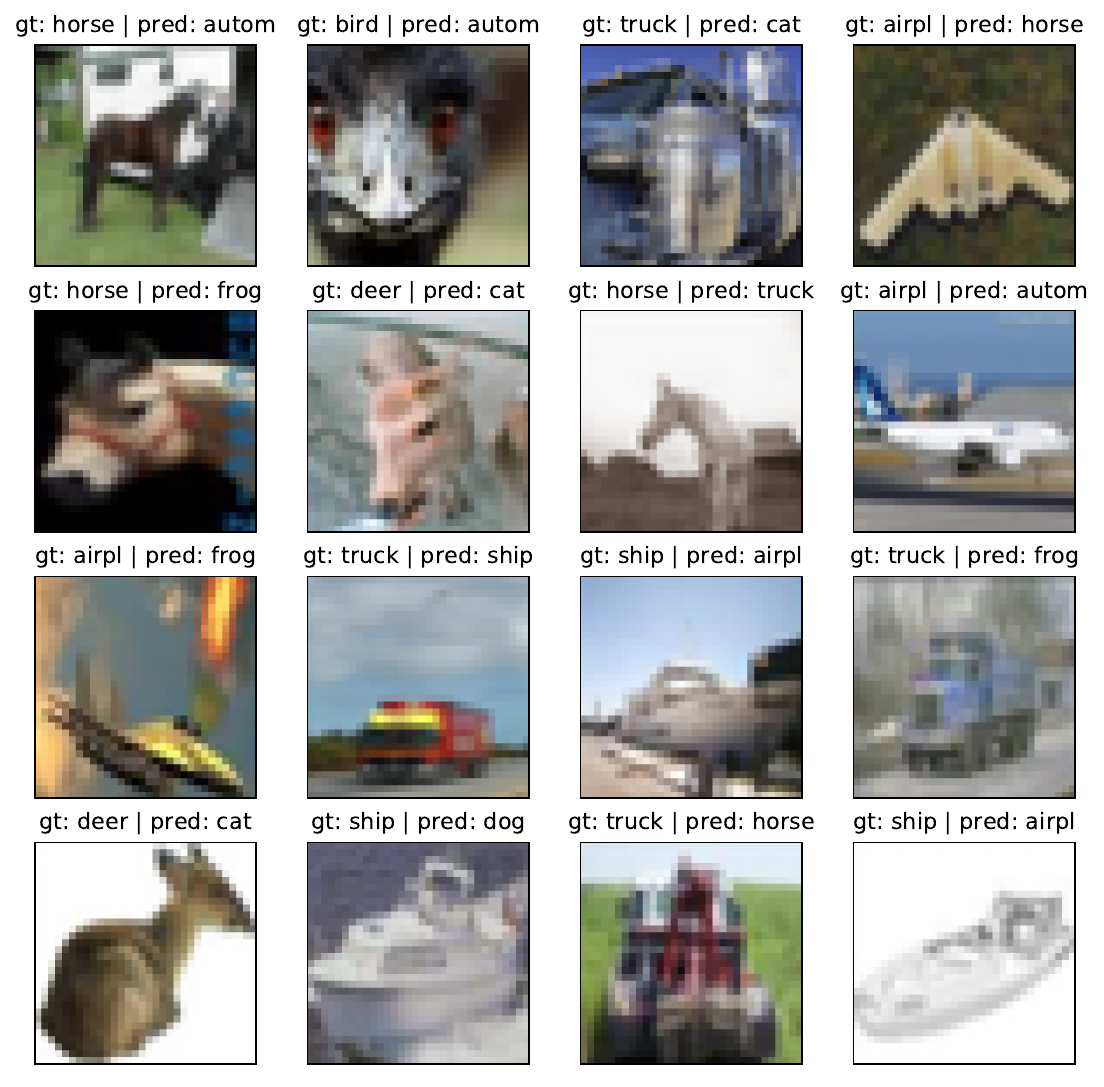} \\ 
    (a) STL10 dataset & (b) CIFAR10 dataset 
    \end{tabular}
    \caption{Samples with large weights from STL10 and CIFAR10 datasets. These are `hard' examples with only parts of the objects appearing or/and complex backgrounds that easily lead to ambiguity. For example, the true label of the second image in the first row of figure (a) is \textit{deer} but the image also contains a car in its background, which leads to ambiguity. Also note the first image in the third row of figure (a), where only part of the tires of the car appears in the image.}
    \label{fig:hard}
\end{figure} 



\section{Proof of \Cref{prop:igrad}} \label{app:implicit}

Recall the reformulated problem (\ref{eq:bar}), which we rewrite as 
\begin{align}
    &\underset{\theta}\min ~ \mathcal{L}(\theta) \coloneqq f\left(\frac{1}{M} \sum_{i=1}^M g_i (\theta, \hat \delta^*_i(\theta)) \right) \nonumber \\ 
    &\text { s.t. } \hat \delta^*_i(\theta) = \argmin_{\delta \in \mathcal{C}_i} \ell_i^{bar}(\theta, \delta) \coloneqq \ell_i'(\theta, \delta) - c \sum_{k=1}^{2p} \log (b_k - \delta^\top a_k), \nonumber 
\end{align}
where $g_i (\theta, \hat \delta^*_i(\theta)) = \exp\left(\frac{\ell_i(\theta, \hat \delta^*_i(\theta))}{r}\right)$, and $f(z) = r \log(z)$ for $z \geq 1$. 

Applying the chain rule to the outer function, we have 
\begin{align} \label{eq:chain}
    \nabla \mathcal{L}(\theta) =& \nabla f \left(\frac{1}{M} \sum_{i=1}^M g_i (\theta, \hat \delta^*_i(\theta))\right)  \frac{1}{M} \sum_{i=1}^M \frac{\partial g_i (\theta, \hat \delta^*_i(\theta))}{\partial \theta} \nonumber \\ 
    =& \frac{r}{\sum_{i=1}^M g_i (\theta, \hat \delta^*_i(\theta))} \sum_{i=1}^M \left(\nabla_\theta g_i (\theta, \hat \delta^*_i(\theta)) + \frac{\partial \hat \delta^*_i(\theta)}{\partial \theta} \nabla_\delta g_i (\theta, \hat \delta^*_i(\theta))\right).
\end{align}
Also, note that $\nabla_\delta \ell_i^{bar}(\theta, \delta) = \nabla_\delta \ell_i'(\theta, \delta) + c \sum_{k=1}^{2p} \frac{a_k}{b_k - \delta^\top a_k}$. Using the implicit differentiation w.r.t. $\theta$ of equation $\nabla_\delta \ell_i^{bar}(\theta, \hat \delta^*_i(\theta)) = \mathbf{0}$, 
i.e.,
$$
\nabla_\delta \ell_i'(\theta, \hat \delta^*_i(\theta))) + c \sum_{k=1}^{2p} \frac{a_k}{b_k - a_k^\top \hat \delta^*_i(\theta))} = \mathbf{0},$$
we obtain 
\begin{align*}
    \nabla_{\theta \delta} \ell_i'(\theta, \hat \delta^*_i(\theta)) + \frac{\partial \hat \delta^*_i(\theta)}{\partial \theta} \nabla_{\delta}^2 \ell_i'(\theta, \hat \delta^*_i(\theta)) + c \frac{\partial \hat \delta^*_i(\theta)}{\partial \theta} \sum_{k=1}^{2p} \frac{a_k a_k^\top}{\left(b_k - a_k^\top \hat \delta^*_i(\theta))\right)^2} = \mathbf{0}.
\end{align*}

Therefore, we obtain 
\begin{align} \label{eq:impl}
    \frac{\partial \hat \delta^*_i(\theta)}{\partial \theta} \left[\nabla_{\delta}^2 \ell_i'(\theta, \hat \delta^*_i(\theta)) + c \sum_{k=1}^{2p} \gamma_k a_k a_k^\top\right] = - \nabla_{\theta \delta} \ell_i'(\theta, \hat \delta^*_i(\theta)). 
\end{align}
where we define $\gamma_k := \frac{1}{\left(b_k - a_k^\top \hat \delta^*_i(\theta))\right)^2}$. 

Further, note that $A = \big(I_p, -I_p\big)^\top$. Thus the first $p$ rows of $A$ (i.e., $a_k, k=1,...,p$) correspond to the $p$ basis vectors of $\mathbb{R}^p$, and hence $a_k a_k^\top = \operatorname{diag}(e_k)$, where $e_k$ is the $k$-th basis vector of $\mathbb{R}^p$. Thus, considering the first $p$ rows we obtain $\sum_{k=1}^{p} \gamma_k a_k a_k^\top = \operatorname{diag}(\gamma_1, ..., \gamma_p)$. Similarly, the bottom $p$ rows yields $\sum_{k=p+1}^{2p} \gamma_k a_k a_k^\top = \operatorname{diag}(\gamma_{p+1}, ..., \gamma_{2p})$. Therefore, we have 
\begin{align} \label{eq:diagc}
    c \sum_{k=1}^{2p} \gamma_k a_k a_k^\top = c \operatorname{diag}(\gamma_1 + \gamma_{p+1}, ..., \gamma_p + \gamma_{2p}) \coloneqq C_i (\theta).  
\end{align} 
Substituting \cref{eq:diagc} in \cref{eq:impl} yields 
\begin{align*} 
    \frac{\partial \hat \delta^*_i(\theta)}{\partial \theta} \left[\nabla_{\delta}^2 \ell_i'(\theta, \hat \delta^*_i(\theta)) + C_i (\theta) \right] = - \nabla_{\theta \delta} \ell_i'(\theta, \hat \delta^*_i(\theta)).
\end{align*}
If $\nabla_{\delta}^2 \ell_i'(\theta, \hat \delta^*_i(\theta)) + C_i (\theta)$ is invertable, the above equation further implies
\begin{align} \label{eq:impl2}
    \frac{\partial \hat \delta^*_i(\theta)}{\partial \theta} = - \nabla_{\theta \delta} \ell_i'(\theta, \hat \delta^*_i(\theta)) \left[\nabla_{\delta}^2 \ell_i'(\theta, \hat \delta^*_i(\theta)) + C_i (\theta) \right]^{-1}.  
\end{align}
 Now, combining \cref{eq:impl2} and \cref{eq:chain} we obtain 
\begin{align*}
    \nabla \mathcal{L} (\theta) = \frac{r}{\sum_{i=1}^M g_i (\theta, \hat \delta^*_i(\theta))}& \sum_{i=1}^M \Big(\nabla_\theta g_i (\theta, \hat \delta^*_i(\theta)) \\
    &- \nabla_{\theta \delta} \ell_i'(\theta, \hat \delta^*_i(\theta)) \left[\nabla_{\delta}^2 \ell_i'(\theta, \hat \delta^*_i(\theta)) + C_i (\theta) \right]^{-1} \nabla_\delta g_i (\theta, \hat \delta^*_i(\theta))\Big), 
\end{align*}
which completes the proof. 




\section{Convergence Analysis of the CID Algorithm} \label{app:theory}

We provide the convergence analysis of the CID algorithm for solving the generic compositional bilevel optimization problem (\ref{eq:cblo}), which we rewrite as follows:
{\small
\begin{align} \label{eq:blo}  
&\underset{\theta}\min \hspace{2pt} F(\theta):= f\left(g \left(\theta, \delta^{*}(\theta) \right)\right) = f\left(\frac{1}{M} \sum_{i=1}^{M} g_i \left(\theta, \delta_i^{*}(\theta) \right)\right) \\ 
&\text {s.t. } \delta^{*}(\theta) = \left(\delta^{*}_1(\theta), ..., \delta^{*}_M(\theta)\right) = \underset{\left(\delta_1, ..., \delta_M\right) \in \mathcal{V}_1 \times ...\times \mathcal{V}_M}{\argmin} \frac{1}{M} \sum_{i=1}^{M} h_i\left(\theta, \delta_i \right). \nonumber
\end{align}}%

{\bf Challenge and Novelty.} We note that although bilevel optimization and compositional optimization have been well studied in the optimization literature, to our best knowledge, there have not been any theoretical analysis of compositional bilevel optimization. The special challenge arising in such a problem is due to the fact that the bias error caused by the stochastic estimation of the compositional function  in the outer-loop is further complicated by the approximation error from the inner loop. Our main novel development here lies in tracking such an error in the convergence analysis. 

To proceed the analysis, we let $w = (\theta, \delta)$ denote all optimization parameters. We denote by $\big\|\cdot\big\|$ the $\ell_2$ norm for vectors and the spectral norm for matrices. 

We adopt the following assumptions for our analysis, which are widely used in bilevel and compositional optimization literature \citep{grazzi2020bo,ji2021bo,ji2021lower,wang2017stochastic,chen2021solving}. 
\begin{assum}\label{ass:lip} 
The objective functions $f$, $g_i$, and $h_i$ for any $i=1,\ldots,M$ satisfy 
\begin{list}{$\bullet$}{\topsep=0.1ex \leftmargin=0.1in \rightmargin=0.in \itemsep =-1mm}
\item $f$ is $C_f$-Lipschitz continuous and $L_f$-smooth, i.e., for any $z$ and $z'$, 
\begin{align}
    \big|f(z) - f(z')\big| \leq C_f \big\|z - z'\big\|, \quad \big\|\nabla f(z) - \nabla f(z')\big\| \leq L_f\big\|z - z'\big\|. 
\end{align} 
\item $g_i$ is $C_g$-Lipschitz continuous and $L_g$-smooth, i.e., for any $w$ and $w'$, 
\begin{align}
    \big\|g_i(w) - g_i(w')\big\| \leq C_g \big\|w - w'\big\|, \quad \big\|\nabla g_i(w) - \nabla g_i(w')\big\| \leq L_g\big\|w - w'\big\|. 
\end{align} 
\item $h_i$ is $L_h$-smooth, i.e., for any $w$ and $w'$, 
\begin{align}
   \big\|\nabla h_i(w) - \nabla h_i(w')\big\| \leq L_h\big\|w - w'\big\|. 
\end{align} 
\end{list} 
\end{assum}
\begin{assum}\label{ass:h}
The function $h_i(\theta, \delta)$ for any $i=1,\ldots,M$ is $\mu$-strongly convex w.r.t. $\delta$ and its second-order derivatives $\nabla_{\theta} \nabla_{\delta} h_i(w)$ and $\nabla_{\delta}^2 h_i(w)$ are $L_{\theta\delta}$- and $L_{\delta\delta}$-Lipschitz, i.e., for any $w$ and $w'$, 
\begin{align}
    \big\|\nabla_{\theta} \nabla_{\delta} h_i(w) - \nabla_{\theta} \nabla_{\delta} h_i(w)\big\| \leq L_{\theta\delta}\big\|w - w'\big\|, \quad \big\|\nabla_{\delta}^2 h_i(w) - \nabla_{\delta}^2 h_i(w')\big\| \leq L_{\delta\delta}\big\|w - w'\big\|.
\end{align}
\end{assum}
\begin{assum}\label{ass:bdv}
The stochastic sample $g_i$ for any $i=1,\ldots,M$ has bounded variance, i.e., 
\begin{align} 
    \mathbb{E}_i \big\|g_i(\theta, \delta_i) - \frac{1}{M}\sum_{j=1}^M g_j(\theta, \delta_j)\big\|^2 \leq \sigma_g^2. 
\end{align}
\end{assum}

The following theorem (as restatement of \Cref{thr}) characterizes the convergence rate of our designed CID algorithm.
\begin{theorem}[Re-statement of \Cref{thr}] \label{thra}
Suppose that Assumptions \ref{ass:lip}, \ref{ass:h}, \ref{ass:bdv} hold. Select the stepsizes as $\beta_t =\frac{1}{\sqrt{T}}$ and $\eta_t \in [\frac{1}{2}, 1)$, and batchsize as $|\mathcal{B}| = \mathcal{O}(T)$. Then, the iterates $\theta_t, t=0,..., T-1$ of the CID algorithm satisfy 
\begin{align}
    \frac{\sum_{t=0}^{T-1} \mathbb{E} \big\| \nabla F(\theta_t) \big\|^2}{T} \leq \mathcal{O}\Big( \frac{1}{\sqrt{T}} + (1-\alpha\mu)^K \Big) \nonumber
\end{align} 
\end{theorem}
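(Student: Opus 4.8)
The plan is to carry out a standard descent-lemma argument for the outer iterate $\theta_t$, but with careful bookkeeping of two coupled error sources: the inner-loop optimization error $\|\delta^K_{i,t}-\delta^*_i(\theta_t)\|$ (which decays geometrically in $K$) and the tracking error $\|u_{t+1}-g(\theta_t,\delta^*(\theta_t))\|$ of the running average. First I would record the consequences of Assumptions~\ref{ass:lip}--\ref{ass:bdv}: strong convexity and smoothness of $h_i$ give that $K$ steps of projected gradient descent with stepsize $\alpha$ contract the inner error, $\mathbb{E}\|\delta^K_{i,t}-\delta^*_i(\theta_t)\|^2 \le (1-\alpha\mu)^{2K}\|\delta^0-\delta^*_i(\theta_t)\|^2$ (or a variant thereof); the Lipschitz properties of the second-order derivatives of $h_i$ together with the implicit-function formula \eqref{eq:ift} give that the hypergradient surrogate is Lipschitz in $\delta$, so $\|\widehat{\nabla}g_i(\theta_t,\delta^K_{i,t})-\tfrac{\partial g_i}{\partial\theta}(\theta_t,\delta^*_i(\theta_t))\|$ is controlled by the inner error; and $C_f$-Lipschitzness of $f$ plus $C_g$-Lipschitzness of $g_i$ give that $F=f\circ g$ is smooth with some constant $L_F$, so the standard descent inequality $F(\theta_{t+1})\le F(\theta_t)+\langle\nabla F(\theta_t),\theta_{t+1}-\theta_t\rangle+\tfrac{L_F}{2}\|\theta_{t+1}-\theta_t\|^2$ applies.

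Next I would expand $\langle\nabla F(\theta_t),\theta_{t+1}-\theta_t\rangle$ using the update \eqref{eq:update}. Write the search direction as $\widehat{\nabla}g(\theta_t,\delta^K_t;\mathcal{B})\nabla f(u_{t+1})$; decompose it into the true gradient $\nabla F(\theta_t)=\tfrac{\partial g}{\partial\theta}(\theta_t,\delta^*(\theta_t))\nabla f(g(\theta_t,\delta^*(\theta_t)))$ plus three error terms — one from $\widehat{\nabla}g$ vs.\ the exact Jacobian (inner error, $\mathcal{O}((1-\alpha\mu)^K)$), one from $\nabla f(u_{t+1})$ vs.\ $\nabla f(g(\theta_t,\delta^*(\theta_t)))$ (controlled by $L_f$ times the tracking error), and one from minibatch sampling (zero-mean, bounded second moment $\sigma_g^2/|\mathcal{B}|$ after conditioning). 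Using Young's inequality, the $\langle\nabla F(\theta_t),\cdot\rangle$ term yields a $-\tfrac{\beta_t}{2}\|\nabla F(\theta_t)\|^2$ contribution plus $\tfrac{\beta_t}{2}$ times the squared errors, and the $\tfrac{L_F}{2}\|\theta_{t+1}-\theta_t\|^2 = \mathcal{O}(\beta_t^2)$ term is higher order once $\beta_t=1/\sqrt{T}$.

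The main obstacle — and the place the paper flags as its novelty — is bounding the tracking error $e_t:=\mathbb{E}\|u_t-g(\theta_{t-1},\delta^*(\theta_{t-1}))\|^2$. I would set up a recursion for $e_{t+1}$: by the update $u_{t+1}=(1-\eta_t)u_t+\eta_t\, g(\theta_t,\delta^K_t;\mathcal{B})$, one gets a contraction factor $(1-\eta_t)$ on $e_t$ (this is where $\eta_t\in[\tfrac12,1)$ is used) plus three driving terms: the bias $\mathcal{O}((1-\alpha\mu)^K)$ from using $\delta^K_t$ instead of $\delta^*$, the variance $\eta_t^2\sigma_g^2/|\mathcal{B}|$ from minibatching, and the ``drift'' $\|g(\theta_t,\delta^*(\theta_t))-g(\theta_{t-1},\delta^*(\theta_{t-1}))\|^2=\mathcal{O}(\|\theta_t-\theta_{t-1}\|^2)=\mathcal{O}(\beta_{t-1}^2)$ coming from $\theta$ moving between iterations (needs $C_g$-Lipschitzness of $g_i$ and Lipschitzness of $\delta^*(\cdot)$, the latter itself a consequence of Assumptions~\ref{ass:h}). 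Summing the $e_t$ recursion over $t$ and using $\sum_t\beta_t^2=\mathcal{O}(1)$, $|\mathcal{B}|=\mathcal{O}(T)$ gives $\sum_{t=0}^{T-1}e_t = \mathcal{O}(1)+\mathcal{O}(T(1-\alpha\mu)^{2K})+\mathcal{O}(1)$. Plugging this back into the telescoped descent inequality, dividing by $\beta_t T=\sqrt{T}$, and collecting terms yields
\begin{align}
\frac{1}{T}\sum_{t=0}^{T-1}\mathbb{E}\|\nabla F(\theta_t)\|^2 \le \frac{F(\theta_0)-\inf F}{\beta T}+\mathcal{O}\!\Big(\frac{1}{\sqrt{T}}+(1-\alpha\mu)^K\Big) = \mathcal{O}\!\Big(\frac{1}{\sqrt{T}}+(1-\alpha\mu)^K\Big),
\end{align}
which is the claimed bound; the alternate $\eta_t=T^{-1/4}$ trade-off follows by re-balancing the variance term $\eta_t^2\sigma_g^2/|\mathcal{B}|$ against $|\mathcal{B}|=\mathcal{O}(\sqrt{T})$ in the same recursion.
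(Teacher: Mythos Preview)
Your proposal is correct and follows essentially the same route as the paper: establish $L_F$-smoothness of $F$, control the inner error via the $(1-\alpha\mu)^K$ contraction of projected GD on the strongly convex $h_i$, derive a one-step recursion for the tracking error with contraction factor $(1-\eta_t)$ plus drift/bias/variance driving terms, and telescope the descent inequality. The only organizational difference is that the paper packages the tracking error together with $F$ into a single Lyapunov function $V_t = F(\theta_t) + \|u_t - g(\theta_{t-1},\delta_{t-1}^K)\|^2$ and telescopes $V_t$ directly (using Young's inequality with weight $\eta_t$ so that the tracking term combines as $(1+\eta_t)(1-\eta_t)<1$), whereas you sum the tracking recursion separately to bound $\sum_t e_t$ and then substitute; these are equivalent bookkeeping devices and yield the same rate.
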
 


In the following two subsections, we first establish a number of useful supporting lemmas and then provide the proof of \Cref{thra} (which is a restatement of \Cref{thr}).

\subsection{Supporting Lemmas}

For notational convenience, we let $L = \max\{L_f, L_g, L_h\}$, $C = \max\{C_f, C_g\}$, and $\tau = \max\{L_{\theta\delta}, L_{\delta\delta}\}$. 
\begin{lemma} \label{lem:lip}
Suppose that Assumptions \ref{ass:lip} and \ref{ass:h} hold. Then, the total objective $F(\theta)$ (defined at the outer level of problem (\ref{eq:blo})  is $L_F$-smooth, i.e., for any $\theta$, $\theta'$, 
\begin{align}
    \big\|\nabla F(\theta) - \nabla F(\theta')\big\| \leq L_F\big\|\theta - \theta'\big\|, 
\end{align}
where $L_F = C^2 L\left(1 + \frac{L}{\mu}\right)^2 + CL_G$. 
\end{lemma}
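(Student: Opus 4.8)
\textbf{Proof proposal for Lemma~\ref{lem:lip}.}

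The plan is to establish smoothness of $F(\theta) = f(g(\theta,\delta^*(\theta)))$ by decomposing the gradient $\nabla F(\theta) = \frac{\partial g(\theta,\delta^*(\theta))}{\partial\theta}\nabla f(g(\theta,\delta^*(\theta)))$ and bounding the variation of each factor in $\theta$. First I would record the auxiliary facts I need about $\delta^*(\theta)$: by Assumption~\ref{ass:h} (strong convexity of each $h_i$ in $\delta$) together with Assumption~\ref{ass:lip} ($L_h$-smoothness), the implicit function theorem gives $\frac{\partial\delta^*_i(\theta)}{\partial\theta} = -\nabla_\theta\nabla_\delta h_i(\theta,\delta^*_i(\theta))[\nabla^2_\delta h_i(\theta,\delta^*_i(\theta))]^{-1}$, which is bounded in norm by $L/\mu$; hence $\delta^*(\theta)$ is $(L/\mu)$-Lipschitz in $\theta$, and therefore the composite argument $w^*(\theta) := (\theta,\delta^*(\theta))$ is $\big(1 + \tfrac{L}{\mu}\big)$-Lipschitz. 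A standard second step, which I would state as a sub-claim, is that $\delta^*(\theta)$ is itself \emph{smooth} (i.e.\ $\frac{\partial\delta^*}{\partial\theta}$ is Lipschitz in $\theta$): differentiate the optimality identity, use the Lipschitzness of the second-order derivatives of $h_i$ from Assumption~\ref{ass:h} together with the uniform lower bound $\mu$ on $\nabla^2_\delta h_i$, and the perturbation bound for matrix inverses $\|A^{-1} - B^{-1}\| \le \|A^{-1}\|\,\|B^{-1}\|\,\|A-B\|$. This yields a Lipschitz constant for $\frac{\partial\delta^*}{\partial\theta}$ of the form $L_G := $ (some polynomial in $L,\tau,1/\mu$), which accounts for the $CL_G$ term in the final constant.

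Next I would assemble the bound on $\|\nabla F(\theta) - \nabla F(\theta')\|$ by the usual add-and-subtract argument. Write $\nabla F(\theta) = J(\theta)\,\nabla f(G(\theta))$ where $J(\theta) := \frac{\partial g(\theta,\delta^*(\theta))}{\partial\theta}$ and $G(\theta) := g(\theta,\delta^*(\theta))$. Then
\begin{align}
\|\nabla F(\theta)-\nabla F(\theta')\| \le \|J(\theta)-J(\theta')\|\,\|\nabla f(G(\theta))\| + \|J(\theta')\|\,\|\nabla f(G(\theta))-\nabla f(G(\theta'))\|. \nonumber
\end{align}
For the first term: $\|\nabla f\| \le C$ by the Lipschitz continuity of $f$; and $J(\theta) = \nabla_\theta g(w^*(\theta)) + \frac{\partial\delta^*}{\partial\theta}\nabla_\delta g(w^*(\theta))$, so $\|J(\theta)-J(\theta')\|$ is controlled using the $L_g$-smoothness of $g$ composed with the $(1+L/\mu)$-Lipschitzness of $w^*$, the bound $L/\mu$ on $\frac{\partial\delta^*}{\partial\theta}$, the $L_G$-Lipschitzness of $\frac{\partial\delta^*}{\partial\theta}$, and the bound $\|\nabla_\delta g\| \le C$. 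This is where the $C^2 L(1+L/\mu)^2$ and $CL_G$ contributions arise. For the second term: $\|J(\theta')\| \le C(1 + L/\mu)$, and $\|\nabla f(G(\theta))-\nabla f(G(\theta'))\| \le L_f\|G(\theta)-G(\theta')\| \le L_f C(1+L/\mu)\|\theta-\theta'\|$ since $G$ is $C(1+L/\mu)$-Lipschitz (chain rule with $g$ being $C_g$-Lipschitz and $w^*$ being $(1+L/\mu)$-Lipschitz). Collecting the $\|\theta-\theta'\|$ coefficients and absorbing constants into $L = \max\{L_f,L_g,L_h\}$, $C = \max\{C_f,C_g\}$ gives $L_F = C^2 L(1+L/\mu)^2 + CL_G$.

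The main obstacle I anticipate is the sub-claim that $\frac{\partial\delta^*}{\partial\theta}$ is Lipschitz in $\theta$ (the smoothness, not just Lipschitzness, of the inner solution map). This requires differentiating the implicit relation a second time conceptually — i.e.\ comparing $\frac{\partial\delta^*_i(\theta)}{\partial\theta}$ and $\frac{\partial\delta^*_i(\theta')}{\partial\theta'}$ — and carefully tracking how the Lipschitz constants $L_{\theta\delta}, L_{\delta\delta}$ of the Hessians, the strong-convexity floor $\mu$, and the already-established Lipschitzness of $\delta^*(\cdot)$ combine through the matrix-inverse perturbation inequality. Everything else is routine triangle-inequality bookkeeping; the only care needed is to make sure all the intermediate quantities ($\|\nabla_\delta g\|$, $\|\nabla_\theta g\|$, $\|[\nabla^2_\delta h_i]^{-1}\|$, $\|\nabla_\theta\nabla_\delta h_i\|$) are uniformly bounded, which follows from the Lipschitz-continuity parts of Assumption~\ref{ass:lip} and the strong convexity in Assumption~\ref{ass:h}. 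I would present the $\frac{\partial\delta^*}{\partial\theta}$-smoothness as an intermediate lemma (or an explicit display with the constant $L_G$ named) and then treat the final assembly as a short computation.
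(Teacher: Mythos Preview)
Your approach is essentially the same as the paper's: chain-rule $\nabla F(\theta)=J(\theta)\nabla f(G(\theta))$, add-and-subtract, bound $\|J\|\le C(1+L/\mu)$ via the implicit-function formula, and invoke Lipschitzness of $J$ for the remaining term. The one discrepancy is what $L_G$ denotes. In the paper, $L_G$ is the Lipschitz constant of the \emph{full Jacobian} $J(\theta)=\frac{\partial g(\theta,\delta^*(\theta))}{\partial\theta}$, imported directly from Lemma~2 of Ji et al.\ (2021) rather than rederived; with that definition the two add-and-subtract terms give exactly $C_f L_G$ and $L_f C_G^2 = L C^2(1+L/\mu)^2$. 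You instead name $L_G$ the Lipschitz constant of $\frac{\partial\delta^*}{\partial\theta}$ and then try to read off both constants from your first term, which does not work: carrying your decomposition through, the first term yields $C\bigl(L(1+L/\mu)^2 + C\,L_G^{\text{yours}}\bigr)$ and the second term yields $C^2 L(1+L/\mu)^2$, so the coefficients of $C$ and $C^2$ land on the wrong factors and an extra $CL(1+L/\mu)^2$ appears. The fix is simply to package your sub-claim as ``$J(\theta)$ is $L_G$-Lipschitz'' (your argument for the smoothness of $\delta^*$ is the main ingredient of that), matching the paper's usage; then the bookkeeping collapses to the stated $L_F$.
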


\begin{proof}
Applying the chain rule, we have 
\begin{align}
    \nabla F(\theta) = \frac{\partial g \left(\theta, \delta^{*}(\theta) \right)}{\partial \theta} \nabla f\left(g \left(\theta, \delta^{*}(\theta) \right)\right).
\end{align}
Therefore, using triangle inequality, we obtain
\begin{align}
    \big\| \nabla F(\theta) - \nabla F(\theta')\big\| =& 
    \big\| \frac{\partial g \left(\theta, \delta^{*}(\theta) \right)}{\partial \theta} \nabla f\left(g \left(\theta, \delta^{*}(\theta) \right)\right) - 
    \frac{\partial g \left(\theta', \delta^{*}(\theta') \right)}{\partial \theta} \nabla f\left(g \left(\theta', \delta^{*}(\theta') \right)\right)\big\| \nonumber \\ 
    \leq & \big\| \frac{\partial g \left(\theta, \delta^{*}(\theta) \right)}{\partial \theta} \left( \nabla f\left(g \left(\theta, \delta^{*}(\theta) \right)\right) - \nabla f\left(g \left(\theta', \delta^{*}(\theta') \right)\right) \right)\big\| \nonumber \\
    & + \big\| \left( \frac{\partial g \left(\theta, \delta^{*}(\theta) \right)}{\partial \theta} - \frac{\partial g \left(\theta', \delta^{*}(\theta') \right)}{\partial \theta}\right) \nabla f\left(g \left(\theta', \delta^{*}(\theta') \right)\right)\big\| \nonumber \\ 
    \leq& \big\| \frac{\partial g \left(\theta, \delta^{*}(\theta) \right)}{\partial \theta} \big\| \big\| \nabla f\left(g \left(\theta, \delta^{*}(\theta) \right)\right) - \nabla f\left(g \left(\theta', \delta^{*}(\theta') \right)\right)\big\| \nonumber \\
    & + \big\| \frac{\partial g \left(\theta, \delta^{*}(\theta) \right)}{\partial \theta} - \frac{\partial g \left(\theta', \delta^{*}(\theta') \right)}{\partial \theta}\big\| \big\|\nabla f\left(g \left(\theta', \delta^{*}(\theta') \right)\right)\big\| \nonumber \\ 
    \leq& L_f \big\| \frac{\partial g \left(\theta, \delta^{*}(\theta) \right)}{\partial \theta} \big\| \big\| g \left(\theta, \delta^{*}(\theta) \right) - g \left(\theta', \delta^{*}(\theta') \right)\big\| \nonumber \\
    &+ C_f \big\| \frac{\partial g \left(\theta, \delta^{*}(\theta) \right)}{\partial \theta} - \frac{\partial g \left(\theta', \delta^{*}(\theta') \right)}{\partial \theta}\big\|. \label{eq:lipf1}
\end{align}
The chain rule yields 
\begin{align*}
    \frac{\partial g_i\left(\theta, \delta^{*}_i(\theta) \right)}{\partial \theta} =& 
    \nabla_{\theta} g_i\left(\theta, \delta^{*}_i(\theta) \right) + \frac{\partial \delta^{*}_i(\theta)}{\partial \theta} \nabla_{\delta} g_i\left(\theta, \delta^{*}_i(\theta) \right) \\ 
    =& \nabla_{\theta} g_i\left(\theta, \delta^{*}_i(\theta) \right) - \nabla_{\theta} \nabla_{\delta} h_i\left(\theta, \delta^{*}_i(\theta) \right) \left[ \nabla_{\delta}^2 h_i\left(\theta, \delta^{*}_i(\theta) \right)\right]^{-1} \nabla_{\delta} g_i\left(\theta, \delta^{*}_i(\theta) \right), 
\end{align*}
where the last equality follows from the implicit differentiation result for bilevel optimization \cite{pedregosa2016hyperparameter,ji2021bo}.

Thus, we obtain 
\begin{align} \label{eq:impupper}
   & \big\| \frac{\partial g_i\left(\theta, \delta^{*}_i(\theta) \right)}{\partial \theta}\big\| \nonumber \\
   &\qquad \leq \big\| \nabla_{\theta} g_i\left(\theta, \delta^{*}_i(\theta) \right) \big\| + \big\| \nabla_{\theta} \nabla_{\delta} h_i\left(\theta, \delta^{*}_i(\theta) \right) \left[ \nabla_{\delta}^2 h_i\left(\theta, \delta^{*}_i(\theta) \right)\right]^{-1} \nabla_{\delta} g_i\left(\theta, \delta^{*}_i(\theta) \right) \big\| \nonumber \\
   &\qquad \leq C_g + \frac{L}{\mu} C_g. 
\end{align}
Therfore, $g \left(\theta, \delta^{*}(\theta) \right) = \frac{1}{M} \sum_{i=1}^{M} g_i \left(\theta, \delta_i^{*}(\theta) \right))$ is Lipschitz with constant $C_G = C_g \left(1 + \frac{L}{\mu}\right)$. Further, following from Lemma 2 in \cite{ji2021bo} we obtain that $\frac{\partial g \left(\theta, \delta^{*}(\theta) \right)}{\partial \theta}$ is Lipschitz with the constant $L_G$. 
Thus, combining with \cref{eq:lipf1}, we obtain 
\begin{align}
    \big\| \nabla F(\theta) - \nabla F(\theta')\big\| \leq& 
    L_f C_G^2 \big\|\theta - \theta'\big\| + C_f L_G \big\|\theta - \theta'\big\| 
\end{align}
Rearranging the above equation completes the proof. 
\end{proof}

\begin{lemma}\label{lem:track}
    Suppose that Assumptions \ref{ass:lip} and \ref{ass:bdv} hold. Then, we have 
    \begin{equation}
        \mathbb{E}_{\mathcal{B}} \big\|u_{t+1} - g(\theta_t, \delta_{t}^K)\big\|^2 \leq (1-\eta_t) \big\| u_{t} - g(\theta_{t-1}, \delta_{t-1}^K) \big\|^2 + \frac{2\eta_t^2}{|\mathcal{B}|} \sigma_g^2 + \frac{C^2}{\eta_t} (1+\kappa^2) \big\| \theta_t - \theta_{t-1} \big\|^2.
    \end{equation}
\end{lemma}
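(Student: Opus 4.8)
The plan is to expand the update rule $u_{t+1} = (1-\eta_t) u_t + \eta_t\, g(\theta_t, \delta_t^K; \mathcal{B})$ around the target $g(\theta_t, \delta_t^K)$ (the full-batch composed value), and control the resulting cross and square terms by a combination of conditional-unbiasedness of the minibatch estimate, the bounded-variance assumption, and Lipschitz continuity of $g$. First I would write
\begin{align}
u_{t+1} - g(\theta_t, \delta_t^K) &= (1-\eta_t)\big(u_t - g(\theta_{t-1}, \delta_{t-1}^K)\big) \nonumber \\
&\quad + (1-\eta_t)\big(g(\theta_{t-1}, \delta_{t-1}^K) - g(\theta_t, \delta_t^K)\big) \nonumber \\
&\quad + \eta_t\big(g(\theta_t, \delta_t^K; \mathcal{B}) - g(\theta_t, \delta_t^K)\big). \nonumber
\end{align}
The last term has zero conditional mean given the past (the minibatch $\mathcal{B}$ is drawn i.i.d.), so when we square and take $\mathbb{E}_{\mathcal{B}}$, its cross terms with the first two (deterministic given the past) vanish, and its own second moment is bounded by $\eta_t^2 \sigma_g^2 / |\mathcal{B}|$ via Assumption~\ref{ass:bdv} (averaging over the minibatch divides the per-sample variance by $|\mathcal{B}|$). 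I keep a factor of $2$ as in the statement to leave room for the minor slack introduced below.

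Next I would handle the two remaining deterministic terms, $a := (1-\eta_t)\big(u_t - g(\theta_{t-1},\delta_{t-1}^K)\big)$ and $b := (1-\eta_t)\big(g(\theta_{t-1},\delta_{t-1}^K) - g(\theta_t,\delta_t^K)\big)$, by the Young/Peter--Paul inequality $\|a+b\|^2 \le (1+\lambda)\|a\|^2 + (1+\lambda^{-1})\|b\|^2$ with the choice $\lambda = \eta_t/(1-\eta_t)$, which is the standard trick that turns $(1-\eta_t)^2(1+\lambda)$ into exactly $1-\eta_t$. This yields $(1-\eta_t)\|u_t - g(\theta_{t-1},\delta_{t-1}^K)\|^2$ for the first piece and a coefficient of order $(1-\eta_t)^2 \cdot \eta_t^{-1} = (1-\eta_t)/\eta_t \le 1/\eta_t$ on $\|g(\theta_{t-1},\delta_{t-1}^K) - g(\theta_t,\delta_t^K)\|^2$. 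For that last squared difference I would invoke Lipschitz continuity of the composed map $g(\cdot,\delta^*(\cdot))$, bounding it by $C^2(1+\kappa^2)\|\theta_t - \theta_{t-1}\|^2$; here I use that $g_i$ is $C_g$-Lipschitz (Assumption~\ref{ass:lip}) and that $\|\partial\delta^*_i/\partial\theta\| \le \kappa := L/\mu$ from the implicit-function bound already derived in the proof of Lemma~\ref{lem:lip}, so the Jacobian norm of $\theta \mapsto g(\theta,\delta^*(\theta))$ is at most $C\sqrt{1+\kappa^2}$ up to constants; collecting constants into $C$ gives the claimed $\frac{C^2}{\eta_t}(1+\kappa^2)\|\theta_t - \theta_{t-1}\|^2$ term.

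The mild subtlety — and the one place to be careful rather than the "hard part" — is that $\delta_t^K$ and $\delta_{t-1}^K$ are $K$-step GD iterates, not the exact minimizers $\delta^*(\theta_t), \delta^*(\theta_{t-1})$, so "Lipschitzness of $g(\theta,\delta^*(\theta))$" does not literally apply to $g(\theta_t,\delta_t^K)$ versus $g(\theta_{t-1},\delta_{t-1}^K)$. I would resolve this by noting the map $(\theta,\delta)\mapsto g(\theta,\delta)$ is jointly Lipschitz and the inner PGD map is itself Lipschitz in $\theta$ (contraction by $(1-\alpha\mu)$ per step under $\mu$-strong convexity and $L_h$-smoothness, Assumption~\ref{ass:h}), so $\|\delta_t^K - \delta_{t-1}^K\| \le \text{const}\cdot\|\theta_t-\theta_{t-1}\|$ uniformly in $K$; this is exactly where the $\kappa$-type constant enters and is absorbed into the $(1+\kappa^2)$ factor. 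The only genuine obstacle is bookkeeping: making sure the Peter--Paul parameter is chosen so the recursion coefficient is precisely $1-\eta_t$ (not $(1-\eta_t)^2$ or $1$), since that is what makes the telescoped bound usable downstream; everything else is a routine application of Young's inequality, conditional independence of the minibatch, and the Lipschitz estimates established in Lemma~\ref{lem:lip}.
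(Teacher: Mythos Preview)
Your proposal is correct and follows essentially the same approach as the paper: the same three-term decomposition of $u_{t+1}-g(\theta_t,\delta_t^K)$, the same use of conditional unbiasedness for the minibatch term plus Assumption~\ref{ass:bdv} for its variance, Young's inequality to split the recursion term from the drift, and the same Lipschitz-in-$\theta$ argument for the PGD iterate $\delta_i^K$ (which the paper establishes first, precisely via the per-step contraction you describe). The only cosmetic difference is the grouping: the paper bundles the recursion and noise terms together and then applies Young's inequality with parameter $\eta_t$ to separate them from the drift $d_t$, whereas you peel off the zero-mean noise first and apply Young's with parameter $\eta_t/(1-\eta_t)$ to the two deterministic pieces; both routes land on the stated bound.
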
 

\begin{proof}
We first show that $\frac{\partial \delta_{i,t}^K}{\partial \theta}$ is $\kappa$-Lipschitz. To explicitly write the dependency of $\delta_{i,t}^{k}$ on $\theta_t$, we define $\delta_{i}^{k}(\theta_t) \coloneqq \delta_{i,t}^{k}$. Then we have
\begin{align*}
    &\hspace{-1cm}\big\| \delta_i^K(\theta) - \delta_i^K(\theta') \big\|  \\  
    =&\big\| \Pi_{\mathcal{X}}\left(\delta_i^{K-1}(\theta) -\alpha\nabla_\delta h_i\left(\theta, \delta_i^{K-1}(\theta)\right)\right)  - 
    \Pi_{\mathcal{X}}\left(\delta_i^{K-1}(\theta') -\alpha\nabla_\delta h_i\left(\theta', \delta_i^{K-1}(\theta')\right)\right)\big\| \\ 
    \leq& \big\| \delta_i^{K-1}(\theta) -\alpha\nabla_\delta h_i\left(\theta, \delta_i^{K-1}(\theta)\right)  - 
    \delta_i^{K-1}(\theta') + \alpha\nabla_\delta h_i\left(\theta', \delta_i^{K-1}(\theta')\right)\big\| \\ 
    \leq& \underbrace{\big\| \delta_i^{K-1}(\theta) - \delta_i^{K-1}(\theta') + \alpha \left(\nabla_\delta h_i\left(\theta', \delta_i^{K-1}(\theta')\right) -\nabla_\delta h_i\left(\theta', \delta_i^{K-1}(\theta)\right)\right)\big\|}_{T_1} \\ 
    &+ \alpha \big\|\nabla_\delta h_i\left(\theta', \delta_i^{K-1}(\theta)\right) -\nabla_\delta h_i\left(\theta, \delta_i^{K-1}(\theta)\right)\big\| \\ 
    \leq& \left(\frac{L - \mu}{L+\mu} \right)\big\| \delta_i^{K-1}(\theta) - \delta_i^{K-1}(\theta')\big\| + \alpha L\big\| \theta - \theta' \big\|, 
\end{align*}
where we upper-bound the term $T_1$ using the fact that the operator $y \rightarrow y - \alpha \nabla h(y)$ is a contraction mapping with the constant $\frac{L - \mu}{L+\mu}$ for an $L$-smooth and $\mu$-stongly convex function $h$ when the stepsize $\alpha$ is set to $\frac{2}{L+\mu}$. 
Hence, telescoping the previous inequality over $k$ from $K-1$ down to $0$ yields 
\begin{align}
    \big\| \delta_i^K(\theta) - \delta_i^K(\theta') \big\| \leq& 
    \left(\frac{L - \mu}{L+\mu} \right)^K \big\| \delta_i^{0}(\theta) - \delta_i^{0}(\theta')\big\| + \alpha L\big\| \theta - \theta' \big\| \sum_{k=0}^{K-1} \left(\frac{L - \mu}{L+\mu} \right)^k \nonumber \\ 
    \leq&  0 + \frac{\alpha L}{1 - \frac{L - \mu}{L+\mu}} \big\| \theta - \theta' \big\| = \kappa \big\| \theta - \theta' \big\|, 
\end{align}
where the second inequality follows because $\delta_i^{0}(\theta) = \delta_i^{0}(\theta')$ as the same initial point, and the last equality follows by setting the stepsize $\alpha$ to $\frac{2}{L+\mu}$. 

Denote $d_t = (1-\eta_t) \left(g(\theta_t, \delta_t^K) - g(\theta_{t-1}, \delta_{t-1}^K)\right) = \frac{1-\eta_t}{M} \sum_{i=1}^M \left(g_i(\theta_t, \delta_{i,t}^K) - g_i(\theta_{t-1}, \delta_{i,t-1}^K)\right)$. We can then obtain
\begin{align}
    \big\|d_t\big\|^2 \leq& \frac{(1-\eta_t)^2}{M} \sum_{i=1}^M \big\| g_i(\theta_t, \delta_{i,t}^K) - g_i(\theta_{t-1}, \delta_{i,t-1}^K) \big\|^2 \nonumber \\ 
    \leq& \frac{(1-\eta_t)^2}{M} \sum_{i=1}^M C^2 \left( \big\| \theta_t - \theta_{t-1} \big\|^2 + \big\| \delta_{i,t}^K - \delta_{i,t-1}^K \big\|^2 \right) \nonumber \\
    \leq& \frac{(1-\eta_t)^2}{M} \sum_{i=1}^M C^2 (1+\kappa^2)\big\| \theta_t - \theta_{t-1} \big\|^2 \nonumber \\
    =& (1-\eta_t)^2 (1+\kappa^2) C^2 \big\| \theta_t - \theta_{t-1} \big\|^2. 
\end{align}
Recall $u_{t+1} = (1-\eta_t)u_t + \eta_t g(\theta_t, \delta_{t}^K; \mathcal{B})$. Thus combining with the definition of $d_t$, we have 
\begin{align}
    &\hspace{-1cm}\mathbb{E}_{\mathcal{B}} \big\|u_{t+1} - g(\theta_t, \delta_{t}^K) + d_t\big\|^2 \nonumber \\ 
    =&\mathbb{E}_{\mathcal{B}} \big\| (1-\eta_t)\left(u_{t} - g(\theta_{t-1}, \delta_{t-1}^K)\right) + \eta_t\left(g(\theta_t, \delta_{t}^K; \mathcal{B}) - g(\theta_t, \delta_{t}^K)\right)\big\|^2 \nonumber \\ 
    =& (1-\eta_t)^2 \big\| u_{t} - g(\theta_{t-1}, \delta_{t-1}^K) \big\|^2 + \eta_t^2 \mathbb{E}_{\mathcal{B}} \big\|g(\theta_t, \delta_{t}^K; \mathcal{B}) - g(\theta_t, \delta_{t}^K)\big\|^2 \nonumber \\ 
    &+ 2(1-\eta_t)\eta_t \left< u_{t} - g(\theta_{t-1}, \delta_{t-1}^K), \mathbb{E}_{\mathcal{B}} \left(g(\theta_t, \delta_{t}^K; \mathcal{B}) - g(\theta_t, \delta_{t}^K) \right)\right> \nonumber \\ 
    =& (1-\eta_t)^2 \big\| u_{t} - g(\theta_{t-1}, \delta_{t-1}^K) \big\|^2 + \frac{\eta_t^2}{|\mathcal{B}|} \mathbb{E}_{i} \big\|g_i(\theta_t, \delta_{i,t}^K) - g(\theta_t, \delta_{t}^K)\big\|^2 \nonumber \\ 
    \leq& (1-\eta_t)^2 \big\| u_{t} - g(\theta_{t-1}, \delta_{t-1}^K) \big\|^2 + \frac{\eta_t^2}{|\mathcal{B}|} \sigma_g^2. 
\end{align} 
Based on the inequality $\big\|a+b\big\|^2 \leq (1+c) \big\|a\big\|^2 + (1+\frac{1}{c}) \big\|b\big\|^2$ for any $c>0$, by letting $c=\eta_t$, we have  
\begin{align}
    \mathbb{E}_{\mathcal{B}} \big\|u_{t+1} - g(\theta_t, \delta_{t}^K)\big\|^2 \leq& 
    (1+\eta_t) \mathbb{E}_{\mathcal{B}} \big\|u_{t+1} - g(\theta_t, \delta_{t}^K) + d_t\big\|^2 +(1 + \frac{1}{\eta_t}) \mathbb{E}_{\mathcal{B}} \big\|d_t\big\|^2 \nonumber \\ 
    \leq& (1+\eta_t) (1-\eta_t)^2 \big\| u_{t} - g(\theta_{t-1}, \delta_{t-1}^K) \big\|^2 + \frac{(1+\eta_t)\eta_t^2}{|\mathcal{B}|} \sigma_g^2 \nonumber \\ 
    &+ \frac{1+\eta_t}{\eta_t} (1-\eta_t)^2 (1+\kappa^2) C^2 \big\| \theta_t - \theta_{t-1} \big\|^2 \nonumber \\ 
    \leq& (1-\eta_t) \big\| u_{t} - g(\theta_{t-1}, \delta_{t-1}^K) \big\|^2 + \frac{2\eta_t^2}{|\mathcal{B}|} \sigma_g^2 + \frac{C^2}{\eta_t} (1+\kappa^2) \big\| \theta_t - \theta_{t-1} \big\|^2. 
\end{align}
Hence, the proof is complete. 
\end{proof}

\begin{lemma}\label{lem:bi}
Suppose that Assumptions \ref{ass:lip} and \ref{ass:h} hold. Then we have 
\begin{align}
    \Big\| \frac{\partial g \left(\theta_t, \delta^{*}(\theta_t) \right)}{\partial \theta} - \widehat \nabla g(\theta_t, \delta_t^K)\Big\|^2 \leq \Omega (1-\alpha\mu)^K\Delta_0, 
\end{align} 
where $\Delta_0 = \max_{i,t} \big\| \delta^{*}_i(\theta_t) - \delta_{0} \big\|^2$ and $\Omega = \mathcal{O} \Big( L + \frac{\tau^2 C^2}{\mu^2} + L\left(\kappa + \frac{\tau C}{\mu^2} \right)^2 \Big)$.
\end{lemma}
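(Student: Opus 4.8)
The goal is to control the error between the true implicit Jacobian $\frac{\partial g(\theta_t,\delta^*(\theta_t))}{\partial\theta}$ and its algorithmic approximation $\widehat\nabla g(\theta_t,\delta_t^K)$, where the only source of discrepancy is that $\delta_t^K$ (the output of $K$ projected-gradient steps) replaces the true minimizer $\delta^*(\theta_t)$ inside the implicit-differentiation formula \eqref{eq:ift}. The plan is to first reduce everything to the inner-loop approximation error $\|\delta_i^K(\theta_t)-\delta_i^*(\theta_t)\|$, and then bound that error by a geometrically contracting term.

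\textbf{Step 1: Linear convergence of the inner loop.} Since each $h_i(\theta_t,\cdot)$ is $\mu$-strongly convex (Assumption~\ref{ass:h}) and $L_h$-smooth (Assumption~\ref{ass:lip}), projected gradient descent with stepsize $\alpha$ is a contraction: $\|\delta_{i,t}^{k}-\delta_i^*(\theta_t)\|\le(1-\alpha\mu)\|\delta_{i,t}^{k-1}-\delta_i^*(\theta_t)\|$ (using nonexpansiveness of the projection onto the convex set $\mathcal V_i$). Iterating from $k=0$ to $K$ gives $\|\delta_{i,t}^{K}-\delta_i^*(\theta_t)\|^2\le(1-\alpha\mu)^{2K}\|\delta_0-\delta_i^*(\theta_t)\|^2\le(1-\alpha\mu)^K\Delta_0$, which is where the $(1-\alpha\mu)^K\Delta_0$ factor and the definition of $\Delta_0=\max_{i,t}\|\delta_i^*(\theta_t)-\delta_0\|^2$ come from.

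\textbf{Step 2: Lipschitz continuity of the hypergradient map in $\delta$.} Write both $\frac{\partial g_i(\theta_t,\delta_i^*(\theta_t))}{\partial\theta}$ and its approximation as the same functional
$G_i(\theta_t,\delta):=\nabla_\theta g_i(\theta_t,\delta)-\nabla_\theta\nabla_\delta h_i(\theta_t,\delta)[\nabla_\delta^2 h_i(\theta_t,\delta)]^{-1}\nabla_\delta g_i(\theta_t,\delta)$ evaluated at $\delta=\delta_i^*(\theta_t)$ versus $\delta=\delta_{i,t}^K$ respectively. I would then show $\|G_i(\theta_t,\delta)-G_i(\theta_t,\delta')\|\le\Lambda\|\delta-\delta'\|$ by a triangle-inequality decomposition over the three factors, using: $L_g$-smoothness of $g_i$ to control $\nabla_\theta g_i$ and $\nabla_\delta g_i$; boundedness ($C_g$) of the gradients of $g_i$; the Lipschitz bounds $L_{\theta\delta},L_{\delta\delta}$ on the second derivatives of $h_i$ (Assumption~\ref{ass:h}); the uniform bound $\|[\nabla_\delta^2 h_i]^{-1}\|\le 1/\mu$ from strong convexity; and the standard resolvent identity $\|A^{-1}-B^{-1}\|\le\|A^{-1}\|\|B^{-1}\|\|A-B\|$ to handle the inverse-Hessian term. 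Collecting constants gives $\Lambda=\mathcal O\big(L+\tfrac{\tau^2 C^2}{\mu^2}+L(\kappa+\tfrac{\tau C}{\mu^2})^2\big)^{1/2}$-type dependence; averaging over the minibatch (Jensen / convexity of $\|\cdot\|^2$) preserves it for the full $g$. The appearance of $\kappa$ (the Lipschitz constant of $\delta_i^K(\cdot)$ established in Lemma~\ref{lem:track}) suggests that the intended decomposition also carries an extra error term coming from differentiating the $K$-step iterate rather than the exact minimizer, which would be bounded by the same geometric factor.

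\textbf{Step 3: Combine.} Chaining Step~2 with Step~1, $\big\|\frac{\partial g(\theta_t,\delta^*(\theta_t))}{\partial\theta}-\widehat\nabla g(\theta_t,\delta_t^K)\big\|^2\le\Lambda^2\cdot\frac1M\sum_i\|\delta_{i,t}^K-\delta_i^*(\theta_t)\|^2\le\Lambda^2(1-\alpha\mu)^K\Delta_0$, and setting $\Omega=\Lambda^2$ finishes the proof. \textbf{The main obstacle} I anticipate is Step~2: carefully bounding the perturbation of the inverse-Hessian--vector product $[\nabla_\delta^2 h_i]^{-1}\nabla_\delta g_i$ when \emph{both} the Hessian and the right-hand side are evaluated at a perturbed point, and tracking which terms pick up a $1/\mu$ versus $1/\mu^2$ factor, so that the final constant $\Omega$ matches the stated $\mathcal O\big(L+\tfrac{\tau^2C^2}{\mu^2}+L(\kappa+\tfrac{\tau C}{\mu^2})^2\big)$. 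The bookkeeping is routine in spirit but is exactly where an error in the power of $\mu$ or a missing $\kappa$-term would sneak in.
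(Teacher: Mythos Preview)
Your plan is correct and matches the paper's approach: the paper does not give a detailed proof at all but simply states that ``the proof follows the steps similar to those in the proof of Lemma~3 in \cite{ji2021bo}'', i.e., exactly the standard AID hypergradient-error analysis you outline (inner-loop linear convergence + Lipschitzness of the implicit-gradient map $G_i(\theta,\cdot)$ + combine). One small clarification: the $\kappa$ in $\Omega$ does \emph{not} arise from differentiating the $K$-step iterate (this is an AID scheme, not ITD); it appears naturally in Step~2 because the product $\nabla_{\theta\delta}h_i\,[\nabla_\delta^2 h_i]^{-1}\,\nabla_\delta g_i$ contributes a Lipschitz term of order $L\cdot(1/\mu)\cdot L=L\kappa$, so you need not hunt for an additional error source.
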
 
\begin{proof}
    The proof follows the steps similar to those in the proof of Lemma 3 in \cite{ji2021bo}. 
\end{proof}
In the following, we define $\Lambda = \Omega(1-\alpha\mu)^K\Delta_0$.  

\begin{lemma}
Suppose that Assumptions \ref{ass:lip}, \ref{ass:h}, \ref{ass:bdv} hold. Then, we have 
\begin{align}
    \mathbb{E}_\mathcal{B} F(\theta_{t+1}) - F(\theta_{t}) 
    \leq& -\beta_t \alpha_t \big\| \nabla F(\theta_t) \big\|^2 + \beta_t \Gamma \Delta_0 (1-\alpha\mu)^K \nonumber \\ 
    &+ \eta_t \mathbb{E}_\mathcal{B} \big\|g(\theta_t, \delta_t^K) - u_{t+1}\big\|^2 + \frac{L_F\beta_t^2}{2} C^4\left(1+\frac{L}{\mu}\right)^2, \label{eq:desc}
\end{align}
where $\alpha_t = \frac{1}{2} -  \frac{\beta_t L^2}{\eta_t} C^2\left(1+\frac{L}{\mu}\right)^2$. 
\end{lemma}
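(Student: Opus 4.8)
The plan is to derive this one-step descent inequality directly from the $L_F$-smoothness of $F$ established in \Cref{lem:lip}. Writing the update in \Cref{eq:update} as $\theta_{t+1}-\theta_t = -\beta_t\,\widehat{\nabla} g(\theta_t,\delta_t^K;\mathcal{B})\,\nabla f(u_{t+1})$, smoothness gives, conditional on the iterate $\theta_t$, $\mathbb{E}_\mathcal{B}F(\theta_{t+1}) \le F(\theta_t) + \mathbb{E}_\mathcal{B}\langle \nabla F(\theta_t), \theta_{t+1}-\theta_t\rangle + \tfrac{L_F}{2}\mathbb{E}_\mathcal{B}\|\theta_{t+1}-\theta_t\|^2$. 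The quadratic term is immediate: since $\|\widehat{\nabla} g(\cdot;\mathcal{B})\|\le C(1+L/\mu)$ by the same argument as \Cref{eq:impupper} (the bound uses only strong convexity and holds at $\delta_t^K$) and $\|\nabla f\|\le C_f\le C$ by \Cref{ass:lip}, it is bounded by $\tfrac{L_F\beta_t^2}{2}C^4(1+L/\mu)^2$, which is exactly the last term of \Cref{eq:desc}. All the work is therefore in the first-order (inner-product) term.

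The central difficulty, flagged in the \textbf{Challenge and Novelty} remark, is that $u_{t+1}=(1-\eta_t)u_t+\eta_t g(\theta_t,\delta_t^K;\mathcal{B})$ and $\widehat{\nabla} g(\theta_t,\delta_t^K;\mathcal{B})$ are built from the \emph{same} minibatch, so $\mathbb{E}_\mathcal{B}$ does not factor through their product. I would decouple them by adding and subtracting the deterministic full-batch value $g^K_t:=g(\theta_t,\delta_t^K)=\tfrac1M\sum_i g_i(\theta_t,\delta_{i,t}^K)$, writing $\nabla f(u_{t+1}) = \nabla f(g^K_t) + \big(\nabla f(u_{t+1})-\nabla f(g^K_t)\big)$. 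Because $\nabla f(g^K_t)$ no longer depends on $\mathcal{B}$ and the minibatch hypergradient is unbiased, $\mathbb{E}_\mathcal{B}\widehat{\nabla} g(\theta_t,\delta_t^K;\mathcal{B}) = \widehat{\nabla} g(\theta_t,\delta_t^K)$, the first piece yields the clean deterministic inner product $-\beta_t\langle \nabla F(\theta_t), D_t\rangle$ with $D_t:=\widehat{\nabla} g(\theta_t,\delta_t^K)\nabla f(g^K_t)$. Splitting $D_t = \nabla F(\theta_t)+(D_t-\nabla F(\theta_t))$ and applying Young's inequality produces $-\tfrac{\beta_t}{2}\|\nabla F(\theta_t)\|^2 + \tfrac{\beta_t}{2}\|D_t-\nabla F(\theta_t)\|^2$.

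To control $\|D_t-\nabla F(\theta_t)\|$ I would split it into the hypergradient-approximation error and the compositional-argument error, $D_t-\nabla F(\theta_t) = \widehat{\nabla} g(\theta_t,\delta_t^K)\big(\nabla f(g^K_t)-\nabla f(g(\theta_t,\delta^*(\theta_t)))\big) + \big(\widehat{\nabla} g(\theta_t,\delta_t^K)-\tfrac{\partial g(\theta_t,\delta^*(\theta_t))}{\partial\theta}\big)\nabla f(g(\theta_t,\delta^*(\theta_t)))$. The second bracket is bounded in squared norm by $\Lambda=\Omega(1-\alpha\mu)^K\Delta_0$ directly from \Cref{lem:bi}, times $\|\nabla f\|^2\le C^2$. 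For the first bracket I would use $L_f$-smoothness of $f$, the $C_g$-Lipschitzness of $g$, and the linear contraction of the inner projected-gradient iterates, $\|\delta_{i,t}^K-\delta_i^*(\theta_t)\|^2\le(1-\alpha\mu)^K\Delta_0$, to obtain $\|\nabla f(g^K_t)-\nabla f(g(\theta_t,\delta^*(\theta_t)))\|\le L C\sqrt{(1-\alpha\mu)^K\Delta_0}$, which combined with $\|\widehat{\nabla} g\|\le C(1+L/\mu)$ gives the same $(1-\alpha\mu)^K\Delta_0$ scaling. Summing both via $\|a+b\|^2\le 2\|a\|^2+2\|b\|^2$ yields $\tfrac{\beta_t}{2}\|D_t-\nabla F(\theta_t)\|^2\le \beta_t\Gamma\Delta_0(1-\alpha\mu)^K$ for a suitable constant $\Gamma$, matching the second term of \Cref{eq:desc}.

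It remains to bound the batch-correlated remainder $-\beta_t\,\mathbb{E}_\mathcal{B}\langle\nabla F(\theta_t), \widehat{\nabla} g(\theta_t,\delta_t^K;\mathcal{B})(\nabla f(u_{t+1})-\nabla f(g^K_t))\rangle$. By Cauchy--Schwarz, the bound $\|\widehat{\nabla} g(\cdot;\mathcal{B})\|\le C(1+L/\mu)$, and $L_f$-smoothness $\|\nabla f(u_{t+1})-\nabla f(g^K_t)\|\le L\|u_{t+1}-g^K_t\|$, this is at most $\beta_t C L (1+L/\mu)\,\|\nabla F(\theta_t)\|\,\mathbb{E}_\mathcal{B}\|u_{t+1}-g^K_t\|$. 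Applying Young's inequality with the weight chosen so the coefficient on $\mathbb{E}_\mathcal{B}\|u_{t+1}-g^K_t\|^2$ equals $\eta_t$ produces exactly the tracking term $\eta_t\mathbb{E}_\mathcal{B}\|g(\theta_t,\delta_t^K)-u_{t+1}\|^2$ of \Cref{eq:desc}, while the reciprocal weight on $\|\nabla F(\theta_t)\|^2$ becomes the $\tfrac{\beta_t^2 L^2}{\eta_t}C^2(1+L/\mu)^2$ that is subtracted from the leading $-\tfrac{\beta_t}{2}\|\nabla F(\theta_t)\|^2$; collecting the two contributions gives the coefficient $\beta_t\alpha_t$ with $\alpha_t=\tfrac12-\tfrac{\beta_t L^2}{\eta_t}C^2(1+L/\mu)^2$ (any leftover numerical factor is absorbed into $\Gamma$ or the slack in $\alpha_t$). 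The main obstacle throughout is precisely this decoupling of $u_{t+1}$ from the sampled hypergradient, together with keeping the two distinct error layers---inner-loop approximation via \Cref{lem:bi} and outer compositional tracking---cleanly separated so that they land respectively on the $(1-\alpha\mu)^K$ term and the $\eta_t$ tracking term rather than contaminating the descent coefficient.
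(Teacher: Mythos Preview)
Your proposal is correct and follows essentially the same route as the paper's proof: start from $L_F$-smoothness, bound the quadratic term via $\|\widehat{\nabla} g(\cdot;\mathcal{B})\|\le C(1+L/\mu)$ and $\|\nabla f\|\le C$, and split the first-order term at the deterministic pivot $\widehat{\nabla} g(\theta_t,\delta_t^K)\nabla f\!\big(g(\theta_t,\delta_t^K)\big)$, using unbiasedness of the minibatch hypergradient on the piece where $\nabla f$ is evaluated at $g_t^K$, \Cref{lem:bi} plus inner-loop contraction for the $\Gamma$-term, and a Young inequality with weight $\eta_t$ for the tracking term. The only cosmetic difference is which intermediate point you add and subtract in the decomposition of $D_t-\nabla F(\theta_t)$ (you pivot at $\widehat{\nabla} g(\theta_t,\delta_t^K)$, the paper pivots at $\tfrac{\partial g(\theta_t,\delta^*(\theta_t))}{\partial\theta}$), which only shifts constants that are absorbed into $\Gamma$ anyway.
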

\begin{proof}
Based on the Lipschitzness of $\nabla F(\theta)$ in \Cref{lem:lip}, we have 
\begin{align}
    F(\theta_{t+1}) - F(\theta_{t}) 
    \leq& \left< \nabla F(\theta_t), \theta_{t+1} - \theta_{t}\right> + \frac{L_F}{2} \big\| \theta_{t+1} - \theta_{t}\big\|^2 \nonumber \\ 
    \leq& -\beta_t \big\|\nabla F(\theta_t)\big\|^2 +\beta_t  \left<\nabla F(\theta_t), \nabla F(\theta_t) - \widehat{\nabla} g(\theta_t, \delta_t^K; \mathcal{B}) \nabla f(u_{t+1})\right> \nonumber \\ 
    &+ \frac{L_F\beta_t^2}{2} \big\| \widehat{\nabla} g(\theta_t, \delta_t^K; \mathcal{B}) \nabla f(u_{t+1}) \big\|^2 \nonumber \\ 
    \leq& -\beta_t \big\|\nabla F(\theta_t)\big\|^2 + 
    \underbrace{\beta_t  \left<\nabla F(\theta_t), \nabla F(\theta_t) - \widehat{\nabla} g(\theta_t, \delta_t^K) \nabla f\left(g(\theta_t, \delta_t^K)\right)\right>}_{A_1} \nonumber \\ 
    &+ \underbrace{\beta_t  \left<\nabla F(\theta_t), \widehat{\nabla} g(\theta_t, \delta_t^K) \nabla f\left(g(\theta_t, \delta_t^K)\right) - \widehat{\nabla} g(\theta_t, \delta_t^K; \mathcal{B}) \nabla f(u_{t+1})\right>}_{A_2} \nonumber \\ 
    & +\frac{L_F\beta_t^2}{2} \big\| \widehat{\nabla} g(\theta_t, \delta_t^K; \mathcal{B}) \nabla f(u_{t+1}) \big\|^2. \label{eq:desc1}
\end{align}
Next, we upper-bound the inner product terms $A_1$ and $A_2$, respectively. Using Young's inequality, we obtain 
\begin{align}
    A_1 \leq& \frac{\beta_t}{2} \big\| \nabla F(\theta_t) \big\|^2 + \frac{\beta_t}{2} \big\| \nabla F(\theta_t) - \widehat{\nabla} g(\theta_t, \delta_t^K) \nabla f\left(g(\theta_t, \delta_t^K)\right) \big\|^2 \nonumber \\ 
    \leq& \frac{\beta_t}{2} \big\| \nabla F(\theta_t) \big\|^2 + \beta_t \big\|\frac{\partial g \left(\theta_t, \delta^{*}(\theta_t) \right)}{\partial \theta}\big\|^2 \big\| \nabla f\left(g \left(\theta_t, \delta^{*}(\theta_t) \right)\right) - \nabla f\left(g(\theta_t, \delta_t^K)\right) \big\|^2 \nonumber \\ 
    &+ \beta_t \big\|\nabla f\left(g(\theta_t, \delta_t^K)\right)\big\|^2 \big\| \frac{\partial g \left(\theta_t, \delta^{*}(\theta_t) \right)}{\partial \theta} - \widehat \nabla g(\theta_t, \delta_t^K)\big\|^2 \nonumber \\ 
    \leq& \frac{\beta_t}{2} \big\| \nabla F(\theta_t) \big\|^2 +  \beta_t L_G^2 L^2 \big\| g \left(\theta_t, \delta^{*}(\theta_t) \right) - g(\theta_t, \delta_t^K) \big\|^2 \nonumber \\ 
    &+ \beta_t C^2 \big\| \frac{\partial g \left(\theta_t, \delta^{*}(\theta_t) \right)}{\partial \theta} - \widehat \nabla g(\theta_t, \delta_t^K)\big\|^2 \nonumber \\ 
    \leq& \frac{\beta_t}{2} \big\| \nabla F(\theta_t) \big\|^2 
    + \frac{\beta_t L_G^2 L^2}{M} \sum_{i=1}^{M}\big\| g_i \left(\theta_t, \delta^{*}_i(\theta_t) \right) - g_i(\theta_t, \delta_{i,t}^K) \big\|^2 + \beta_t C^2 \Lambda \nonumber \\ 
    \leq& \frac{\beta_t}{2} \big\| \nabla F(\theta_t) \big\|^2 
    + \frac{\beta_t L_G^2 L^2 C^2}{M} \sum_{i=1}^{M}\big\| \delta^{*}_i(\theta_t) - \delta_{i,t}^K \big\|^2 + \beta_t C^2 \Lambda \nonumber \\ 
    \leq& \frac{\beta_t}{2} \big\| \nabla F(\theta_t) \big\|^2 
    + \beta_t L_G^2 L^2 C^2 \frac{(1-\alpha\mu)^K}{M} \sum_{i=1}^{M}\big\| \delta^{*}_i(\theta_t) - \delta_{0} \big\|^2 + \beta_t C^2 \Lambda \nonumber \\ 
    \leq& \frac{\beta_t}{2} \big\| \nabla F(\theta_t) \big\|^2 
    + \beta_t L_G^2 L^2 C^2 \Delta_0 (1-\alpha\mu)^K  + \beta_t C^2 \Omega(1-\alpha\mu)^K\Delta_0 \nonumber \\ 
    =& \frac{\beta_t}{2} \big\| \nabla F(\theta_t) \big\|^2 
    + \beta_t \Gamma \Delta_0 (1-\alpha\mu)^K, \label{eq:a1}
\end{align}
where $\Gamma = L_G^2 L^2 C^2 + C^2\Omega$, $\Delta_0 = \max_{i,t} \big\| \delta^{*}_i(\theta_t) - \delta_{0} \big\|^2$, and $\Lambda = \Omega(1-\alpha\mu)^K\Delta_0$. 

Further, we have 
\begin{align}
    \mathbb{E}_\mathcal{B} A_2 =& \beta_t \mathbb{E}_\mathcal{B} \left< \nabla F(\theta_t), \widehat{\nabla} g(\theta_t, \delta_t^K; \mathcal{B}) \nabla f\left(g(\theta_t, \delta_t^K)\right) - \widehat{\nabla} g(\theta_t, \delta_t^K; \mathcal{B}) \nabla f(u_{t+1}) \right> \nonumber \\ 
    \leq& \beta_t \big\|\nabla F(\theta_t)\big\| \mathbb{E}_\mathcal{B} \left[\big\|\widehat{\nabla} g(\theta_t, \delta_t^K; \mathcal{B})\big\| \big\|\nabla f\left(g(\theta_t, \delta_t^K)\right) - \nabla f(u_{t+1})\big\| \right] \nonumber \\ 
    \leq& \beta_t L \big\|\nabla F(\theta_t)\big\| \mathbb{E}_\mathcal{B} \left[\big\|\widehat{\nabla} g(\theta_t, \delta_t^K; \mathcal{B})\big\| \big\|g(\theta_t, \delta_t^K) - u_{t+1}\big\| \right] \nonumber \\ 
    \leq& \eta_t \mathbb{E}_\mathcal{B} \big\|g(\theta_t, \delta_t^K) - u_{t+1}\big\|^2 + \frac{\beta_t^2 L^2}{\eta_t} \big\| \nabla F(\theta_t) \big\|^2 \mathbb{E}_\mathcal{B} \big\|\widehat{\nabla} g(\theta_t, \delta_t^K; \mathcal{B})\big\|^2 \nonumber \\ 
    \leq& \eta_t \mathbb{E}_\mathcal{B} \big\|g(\theta_t, \delta_t^K) - u_{t+1}\big\|^2 + \frac{\beta_t^2 L^2}{\eta_t} C^2(1+\frac{L}{\mu})^2 \big\| \nabla F(\theta_t) \big\|^2, \label{eq:a2}
\end{align}
where the last inequality uses the upper-bound $\big\|\widehat \nabla g_i(\theta_t, \delta_{i,t}^K)\big\| \leq C + \frac{L}{\mu} C$, which can be obtained similarly to \cref{eq:impupper}. 

Therefore, taking the conditional expectation $\mathbb{E}_\mathcal{B}$ in both sides of \cref{eq:desc1}, applying the bounds for $A_1$ and $\mathbb{E}_\mathcal{B} A_2$ in \cref{eq:a1,eq:a2}, and noting that $\mathbb{E}_\mathcal{B} \big\| \widehat{\nabla} g(\theta_t, \delta_t^K; \mathcal{B}) \nabla f(u_{t+1}) \big\|^2 \leq C^2 \mathbb{E}_\mathcal{B} \big\| \widehat{\nabla} g(\theta_t, \delta_t^K; \mathcal{B})\big\|^2 \leq C^4\left(1+\frac{L}{\mu}\right)^2$, we obtain 
\begin{align*}
    \mathbb{E}_\mathcal{B} F(\theta_{t+1}) - F(\theta_{t}) 
    \leq& -\frac{\beta_t}{2} \big\| \nabla F(\theta_t) \big\|^2 + \beta_t \Gamma \Delta_0 (1-\alpha\mu)^K + \eta_t \mathbb{E}_\mathcal{B} \big\|g(\theta_t, \delta_t^K) - u_{t+1}\big\|^2 \nonumber \\ 
    &+ \frac{\beta_t^2 L^2}{\eta_t} C^2\left(1+\frac{L}{\mu}\right)^2 \big\| \nabla F(\theta_t) \big\|^2 + \frac{L_F\beta_t^2}{2} C^4\left(1+\frac{L}{\mu}\right)^2 \nonumber \\ 
    \leq& -\beta_t \left(\frac{1}{2} -  \frac{\beta_t L^2}{\eta_t} C^2\left(1+\frac{L}{\mu}\right)^2 \right) \big\| \nabla F(\theta_t) \big\|^2 + \beta_t \Gamma \Delta_0 (1-\alpha\mu)^K \nonumber \\ 
    &+ \eta_t \mathbb{E}_\mathcal{B} \big\|g(\theta_t, \delta_t^K) - u_{t+1}\big\|^2 + \frac{L_F\beta_t^2}{2} C^4\left(1+\frac{L}{\mu}\right)^2. 
\end{align*}
Then, the proof is complete. 
\end{proof}


\subsection{Proof of \Cref{thra} (i.e., \Cref{thr})}
Denote $V_t = F(\theta_t) + \big\|g(\theta_{t-1}, \delta_{t-1}^K) - u_{t}\big\|^2$. Then, using \cref{eq:desc} we obtain 
\begin{align}
    \mathbb{E}_\mathcal{B} V_{t+1} - V_{t} 
    \leq& -\beta_t \alpha_t \big\| \nabla F(\theta_t) \big\|^2 + \beta_t \Gamma \Delta_0 (1-\alpha\mu)^K - \big\|g(\theta_{t-1}, \delta_{t-1}^K) - u_{t}\big\|^2 \nonumber \\ 
    &+ (1+\eta_t) \mathbb{E}_\mathcal{B} \big\|g(\theta_t, \delta_t^K) - u_{t+1}\big\|^2 + \frac{1}{2} L_F\beta_t^2 C^4\left(1+\frac{L}{\mu}\right)^2 \nonumber \\ 
    \leq& -\beta_t \alpha_t \big\| \nabla F(\theta_t) \big\|^2 + \beta_t \Gamma \Delta_0 (1-\alpha\mu)^K - \big\|g(\theta_{t-1}, \delta_{t-1}^K) - u_{t}\big\|^2 \nonumber \\ 
    & + (1+\eta_t)(1-\eta_t) \big\|g(\theta_{t-1}, \delta_{t-1}^K) - u_{t}\big\|^2 + \frac{2(1+\eta_t)}{|\mathcal{B}|} \eta_t^2 \sigma_g^2 \nonumber \\ 
    &+ \frac{C^2}{\eta_t} (1+\eta_t)(1+\kappa^2) \beta_t^2 C^4\left(1+\frac{L}{\mu}\right)^2 + \frac{1}{2} L_F\beta_t^2 C^4\left(1+\frac{L}{\mu}\right)^2, 
\end{align}
where the last inequality follows from \cref{lem:track}. 
Further, following from the fact that $(1-\eta_t)(1+\eta_t)=1-\eta_t^2<1$, we obtain 
\begin{align}
     \mathbb{E}_\mathcal{B} V_{t+1} - V_{t} 
    \leq& -\beta_t \alpha_t \big\| \nabla F(\theta_t) \big\|^2 + \beta_t \Gamma \Delta_0 (1-\alpha\mu)^K + \frac{2(1+\eta_t)}{|\mathcal{B}|} \eta_t^2 \sigma_g^2 \nonumber \\ 
    &+ \frac{1+\eta_t}{\eta_t} \beta_t^2 C^6 (1+\kappa^2)\left(1+\frac{L} {\mu}\right)^2 + \frac{1}{2} L_F\beta_t^2 C^4\left(1+\frac{L}{\mu}\right)^2. \label{eq:th1}
\end{align}
Now, select $\eta_t \in [\frac{1}{2}, 1)$ and $\beta_t$ such that $\alpha_t \geq \frac{1}{4}$, i.e., $\beta_t \leq \frac{1}{2 L_F^2 C^2 \left(1+\frac{L}{\mu}\right)^2}$. Hence, taking total expectation of \cref{eq:th1} yields 
\begin{align}
    \mathbb{E} V_{t+1} - \mathbb{E} V_{t} 
    \leq& - \frac{\beta_t}{4} \mathbb{E} \big\| \nabla F(\theta_t) \big\|^2 + \beta_t \Gamma \Delta_0 (1-\alpha\mu)^K + \frac{4\sigma_g^2}{|\mathcal{B}|} \nonumber \\ 
    &+ 4 \beta_t^2 C^6 (1+\kappa^2)\left(1+\frac{L} {\mu}\right)^2 + \frac{1}{2} L_F\beta_t^2 C^4\left(1+\frac{L}{\mu}\right)^2 \nonumber \\ 
    =& - \frac{\beta_t}{4} \big\| \nabla F(\theta_t) \big\|^2 + \beta_t \Gamma \Delta_0 (1-\alpha\mu)^K + \frac{4\sigma_g^2}{|\mathcal{B}|} + \beta_t^2 D_{\kappa}, \label{eq:th2}
\end{align}
where we define $D_{\kappa} = \left(4C^2(1+\kappa^2) + \frac{1}{2} L_F\right) \left(1+\kappa\right)^2 C^4$. 
Therefore, telescoping \cref{eq:th2} over $t$ from $0$ to $T-1$ yields 
\begin{align*}
    \mathbb{E} V_{T} - V_{0} 
    \leq& - \sum_{t=0}^{T-1}\frac{\beta_t}{4} \mathbb{E} \big\| \nabla F(\theta_t) \big\|^2 + \frac{4\sigma_g^2 T}{|\mathcal{B}|} + \Gamma \Delta_0 (1-\alpha\mu)^K \sum_{t=0}^{T-1} \beta_t + D_{\kappa} \sum_{t=0}^{T-1} \beta_t^2 .
\end{align*} 
Thus, rearranging terms, we obtain 
\begin{align}
    \frac{\sum_{t=0}^{T-1}\beta_t \mathbb{E} \big\| \nabla F(\theta_t) \big\|^2}{\sum_{t=0}^{T-1} \beta_t} \leq \frac{16\sigma_g^2 T}{|\mathcal{B}| \sum_{t=0}^{T-1} \beta_t} + 4\Gamma \Delta_0 (1-\alpha\mu)^K + 4 D_{\kappa} \frac{\sum_{t=0}^{T-1} \beta_t^2}{\sum_{t=0}^{T-1} \beta_t} + \frac{4V_0}{\sum_{t=0}^{T-1} \beta_t}. 
\end{align}
Hence, the proof is complete by choosing the batchsize $|\mathcal{B}| = \mathcal{O}(T)$ and stepsize $\beta_t = \frac{1}{\sqrt{T}}$. 


\end{document}